\def\eqref#1{equation~\ref{#1}}
\def\1{\bm{1}}
\DeclareMathAlphabet{\mathsfit}{\encodingdefault}{\sfdefault}{m}{sl}
\SetMathAlphabet{\mathsfit}{bold}{\encodingdefault}{\sfdefault}{bx}{n}
\DeclareMathOperator*{\argmax}{arg\,max}
\newcommand\numberthis{\addtocounter{equation}{1}\tag{\theequation}}
\newtheorem*{theorem*}{Theorem}
\newtheorem{theorem}{Theorem}
\newtheorem{proposition}{Proposition}
\definecolor{myblue}{rgb}{.8, .8, 1}
\newtheorem{definition}{Definition}
\newcommand{\myrowcolour}{\rowcolor[gray]{0.925}}
\definecolor{rulecolor}{RGB}{70,10,171}
\definecolor{tableheadcolor}{RGB}{120,50,200}
\newcommand{\topline}{ %
        \arrayrulecolor{rulecolor}\specialrule{0.1em}{\abovetopsep}{0pt}}%
\newcommand{\midtopline}{ %
        \arrayrulecolor{rulecolor}\specialrule{\lightrulewidth}{0pt}{0pt}}%
\newcommand{\bottomline}{ %
        \arrayrulecolor{rulecolor} \specialrule{\lightrulewidth}{0pt}{0pt}}%
\newtcolorbox{mybox}[3][]
{
  colframe = #2!15,
  colback  = #2!10,
  coltitle = #2!10!black,  
  title    = {#3},
  boxsep   = 0.25pt,
  left     = 0.5pt,
  right    = 0.5pt,
  top      = 0pt,
  bottom   = 0pt,
  width=\linewidth+13pt,
  #1,
}
\title{An Online Riemannian PCA for Stochastic Canonical Correlation Analysis}
\author{%
  Zihang~Meng$^+\dagger$, Rudrasis~Chakraborty$^*\dagger$ and Vikas~Singh$^+$ \\
  $+$ University of Wisconsin Madison \\
  $*$ Amazon Lab126\\
  $\dagger$ Equal Contribution \\
  \texttt{zmeng29@wisc.edu, rudrasischa@gmail.com, vsingh@biostat.wisc.edu} \\
}
\date{}
\begin{document}

\maketitle

\begin{abstract}
  We present an efficient stochastic algorithm (RSG+) for canonical correlation analysis (CCA) using a reparametrization of the projection matrices. We show how this reparametrization (into structured matrices), simple in hindsight,  
  directly presents an opportunity to repurpose/adjust 
  mature techniques  
  for numerical optimization on Riemannian manifolds. 
  Our developments nicely 
  complement existing methods for this problem which either require
  $O(d^3)$ time complexity per iteration with 
  $O(\frac{1}{\sqrt{t}})$ convergence rate (where $d$ is the dimensionality)
  or only extract the top $1$ component with $O(\frac{1}{t})$ convergence rate.
  In contrast, our algorithm offers a strict improvement for this classical problem: it achieves $O(d^2k)$ runtime complexity per iteration for
  extracting the top $k$ canonical components with $O(\frac{1}{t})$ convergence rate.
  While the paper primarily focuses 
  on the formulation and technical analysis of its 
  properties, our 
  experiments show that 
  the empirical behavior on 
  common datasets is quite promising. We also explore a potential application in training fair models where the label of protected attribute is 
  missing or otherwise unavailable. 
\end{abstract}

\vspace{-7pt}
\section{Introduction}
\vspace{-3pt}
Canonical correlation analysis (CCA) is a classical
method for evaluating correlations between two sets of variables.
It is commonly used in unsupervised multi-view learning,
where the multiple views of the data may correspond to
image, text, audio and so on
\cite{rupnik2010multi,chaudhuri2009multi,luo2015tensor}.
Classical formulations have also been
extended to leverage advances in
representation learning, for example, \cite{andrew2013deep}
showed how the CCA  can be  interfaced 
with deep neural networks enabling  modern use cases. Many results
over the last few years have used CCA or its variants for problems
including measuring representational similarity in deep neural networks \cite{morcos2018insights},
speech recognition \cite{Couture2019DeepML}, and so on.


The goal in CCA is to find linear combinations within two random variables $\bold{X}$ and $\bold{Y}$
which have maximum correlation with each other. Formally, 
the CCA problem is defined in the following way.
Let $X \in \mathbf{R}^{N\times d_x}$ and $Y \in \mathbf{R}^{N\times d_y}$ be $N$ samples respectively drawn from pair of random variables $\bold{X}$ ($d_x$-variate random variable) and $\bold{Y}$ ($d_y$-variate random variable), with unknown joint probability distribution. The goal is to 
find the projection matrices $U\in \mathbf{R}^{d_x\times k}$ and
$V\in\mathbf{R}^{d_y\times k}$, with $k\le \mathrm{min}\{d_x,d_y\}$, such
that the correlation is maximized:
\begin{align}
\label{cca_opt1}
\max_{U, V} \:\:& F = \text{trace}\left(U^TC_{XY} V \right) \:\: \: \text{ s.t. } & U^T C_X U=I_{k}, V^TC_Y V=I_k 
\end{align}
Here, $C_X = \frac{1}{N}X^TX$ and $C_Y = \frac{1}{N}Y^TY$ are the sample covariance matrices, and $C_{XY}= \frac{1}{N}X^TY$ denotes  the sample cross-covariance.

The objective function in (\ref{cca_opt1})
is the expected cross-correlation in the projected space and the
constraints specify
that different canonical components should be decorrelated. Let us define the whitened covariance $T\coloneqq C_X^{-1/2}C_{XY}C_Y^{-1/2}$ and
$\Phi_k$ (and $\Psi_k$) contains the top-$k$ left (and right) singular vectors of $T$. It is known
\cite{Golub1992TheCC} that the optimum of (\ref{cca_opt1})
is achieved at $U^*=C_X^{-1/2}\Phi_k$, $V^*=C_Y^{-1/2}\Psi_k$. Finally, we compute $U^*$, $V^*$ by applying a $k$-truncated SVD to $\widetilde{T}$.


{\bf Runtime and memory considerations.}
The above procedure is simple but is only feasible when the data matrices are small.
In most modern applications, not only are
the datasets large but also the dimension $d$ (let $d = \max\{d_x, d_y\}$) of each sample can be  high, especially
if representations are being learned using deep models. 
As a result, the computational footprint of the algorithm can be high. 
This has motivated the study of stochastic optimization routines
for solving CCA. 
Observe that in contrast to the typical settings where stochastic  schemes are most effective,
the CCA objective does not decompose over samples in the dataset. 
Many efficient strategies have been proposed in the literature: 
for example, \cite{ge2016efficient,wang2016efficient} present
Empirical Risk Minimization (ERM) models which optimize
the empirical objective.
More recently, \cite{gao2019stochastic,bhatia2018gen,arora2017stochastic} describe
proposals that optimize the population objective. To summarize the approaches succinctly,
if we are satisfied with identifying the top $1$ component of CCA, effective schemes
are available by utilizing either extensions of the Oja's rule \cite{Oja1982SimplifiedNM} to the generalized eigenvalue problem \cite{bhatia2018gen} or the alternating SVRG algorithm \cite{gao2019stochastic}). 
Otherwise, a stochastic approach must make use of an explicit whitening operation 
which involves a cost of $d^3$ for each iteration \cite{arora2017stochastic}. 

{\bf Observation.}
Most approaches either directly optimize (\ref{cca_opt1}) or instead
a reparameterized or regularized form \cite{ge2016efficient,AllenZhu2016DoublyAM,arora2017stochastic}. Often,
the search space for $U$ and $V$ corresponds to
 the entire $\mathbf{R}^{d\times k}$ (ignoring the constraints for the moment). But if the formulation could be cast in a form which involved
approximately writing $U$ and $V$ as a product of structured matrices, we may be able to obtain specialized routines which are tailored to
exploit those properties. Such a reformulation is not difficult to derive -- where
the matrices used to express $U$ and $V$ can be identified
as objects that live in well studied geometric spaces. Then, utilizing the geometry of the space and borrowing relevant tools from
differential geometry leads to an efficient approximate algorithm for top-$k$ CCA which optimizes the population objective in a streaming fashion. 

\textbf{Contributions.}
(a) First, we re-parameterize the top-$k$ CCA problem as an optimization problem on specific
matrix manifolds, and show that it is equivalent to the original formulation in (\ref{cca_opt1}).
(b) Informed by the geometry of the manifold,
  we derive stochastic gradient descent (SGD) algorithms for solving the re-parameterized
  problem with $O(d^2k)$ cost per iteration
  and provide convergence rate guarantees.
  (c) This analysis gives a direct mechanism
  to obtain an upper bound on the number of
  iterations needed to guarantee an $\epsilon$ error w.r.t. the population objective for the
  CCA problem. 
  (d) The algorithm works in a streaming
  manner so it easily scales to large datasets and we do not need to assume access to
  the full dataset at the outset.
  (e) We present empirical evidence for both the standard CCA model
  and the DeepCCA setting \cite{andrew2013deep}, describing advantages
  and limitations. 
\section{Stochastic CCA: Reformulation, Algorithm and Analysis}
\vspace{-5pt}
\label{sec:method}
Let us recall the objective function for CCA as given in (\ref{cca_opt1}).
We denote $X\in \mathbf{R}^{N\times d_x}$ as
the matrix consisting of the samples $\left\{\mathbf{x}_i\right\}$ drawn
from a zero mean random variable $\bold{X}\sim\mathcal{X}$ and
$Y\in \mathbf{R}^{N\times d_y}$ denotes
the matrix consisting
of samples $\left\{\mathbf{y}_i\right\}$ drawn from a
zero mean random variable $\bold{Y}\sim\mathcal{Y}$.
For notational simplicity, we assume that
$d_x = d_y=d$ although the results hold for general $d_x$ and $d_y$. Also recall that $C_X$, $C_Y$ are the covariance matrices of $\bold{X}$, $\bold{Y}$. $C_{XY}$ is the cross-covariance matrix between $\bold{X}$ and $\bold{Y}$. $U\in\mathbf{R}^{d\times k}$ ($V\in \mathbf{R}^{d\times k}$) is the matrix consisting
of $\left\{\mathbf{u}_j\right\}$ ($\left\{\mathbf{v}_j\right\}$) , where $\left(\left\{\mathbf{u}_j\right\}, \left\{\mathbf{v}_j\right\}\right)$ are
the canonical directions. 
The constraints in (\ref{cca_opt1}) are called {\it whitening constraints}. 

\textbf{Reformulation:} In the CCA formulation, the matrices consisting of canonical correlation directions, i.e., $U$ and $V$, are unconstrained, hence the search space is the entire $\mathbf{R}^{d\times k}$. Now we reformulate the CCA objective by reparameterizing $U$ and $V$. In order to do that, let us take a brief detour and recall the objective function of principal component analysis (PCA):
\begin{align}
\label{pca_opt}
    \widehat{U} = \argmax_{U^{\prime}}
         &\quad \text{trace}(\widehat{R}) \qquad\qquad
     \text{subject to} & \quad \widehat{R} = U^{\prime T} C_X U^{\prime}; \:\:\:\: U^{\prime T}U^{\prime} = I_k
\end{align}

{\it Observe that by running PCA and assigning $U = \widehat{U}\widehat{R}^{\sfrac{-1}{2}}$ in (\ref{cca_opt1}) (analogous for $V$ using $C_Y$), we can satisfy the whitening constraint}. Of course, writing $U = \widehat{U}\widehat{R}^{\sfrac{-1}{2}}$ does satisfy the whitening constraint, but such a $U$ (and $V$) will not maximize $\text{trace}\left(U^T C_{XY}V\right)$, objective of  (\ref{cca_opt1}). Hence, additional work beyond the PCA solution is needed. Let us start from $\widehat{R}$
but relax the PCA solution 
by using  an arbitrary $\widetilde{R}$ instead of diagonal $\widehat{R}$ (this will still satisfy the whitening 
constraint). 

By writing $U = \widetilde{U}\widetilde{R}$ with  $\widetilde{U}^T\widetilde{U} = I_k$ and $\widetilde{R} \in \mathbf{R}^{k\times k}$. 
Thus we can approximate CCA objective (we will later show how good this approximation is) as
\begin{align}
\label{cca_opt3}
    \max_{\substack{ \widetilde{U}, \widetilde{V} \in \textsf{St}(k, d)\\ R_u, R_v \in \mathbf{R}^{k\times k}\\ U = \widetilde{U}R_u; ~V = \widetilde{V}R_v}}
        \: \underbrace{\text{trace}\left(U^T C_{XY}V \right)}_{\text{$\widetilde{F}$}} + \underbrace{\text{trace}\left(\widetilde{U}^TC_X\widetilde{U}\right) + \text{trace}\left(\widetilde{V}^TC_Y\widetilde{V}\right)}_{\text{$\widetilde{F}_{\text{pca}}$}}  & \: \:
    \text{s.t. } 
    & \:\: \substack{U^TC_XU = I_k \\ V^TC_YV = I_k}
\end{align}
Here, $\textsf{St}(k, d)$ denotes the manifold consisting of $d\times k$ (with $k\leq d$) column orthonormal matrices, i.e., $\textsf{St}(k, d) = \left\{X\in \mathbf{R}^{d\times k} | X^TX = I_k\right\}$. {\it Observe that in (\ref{cca_opt3}), we approximate the optimal $U$ and $V$ as a linear combination of $\widetilde{U}$ and $\widetilde{V}$ respectively. Thus, the aforementioned PCA solution can act as a feasible initial solution for (\ref{cca_opt3}}).  

As the choice of $R_u$ and $R_v$ is arbitrary, we can further reparameterize these matrices by constraining them to be full rank (of rank $k$) and using the RQ decomposition \cite{golub1971singular} which gives us the following reformulation.
\begin{mybox}{gray}{\bf A Reformulation for CCA}
\begin{subequations}
\label{cca_opt2}
\begin{align}
    \max_{\substack{ \widetilde{U}, \widetilde{V}, S_u, S_v, Q_u, Q_v\\ U = \widetilde{U}S_uQ_u; ~V = \widetilde{V}S_vQ_v}}
        & \quad \underbrace{\text{trace}\left(U^T C_{XY}V \right)}_{\text{$\widetilde{F}$}} + \underbrace{\text{trace}\left(\widetilde{U}^TC_X\widetilde{U}\right) + \text{trace}\left(\widetilde{V}^TC_Y\widetilde{V}\right)}_{\text{$\widetilde{F}_{\text{pca}}$}}   \label{cca_obj2}\\
    \text{subject to } 
    & \quad U^TC_XU = I_k \nonumber \\
    & \quad V^TC_YV = I_k  \label{cca_constraints2} \\ 
    & \quad \widetilde{U}, \widetilde{V} \in \textsf{St}(k, d);~Q_u, Q_v \in \textsf{SO}(k) \nonumber \\
    & \quad S_u, S_v ~\text{is upper triangular}\nonumber 
\end{align}
\end{subequations}
\end{mybox}

Here, $\textsf{SO}(k)$ is the space of $k\times k$ special orthogonal matrices, i.e., $\textsf{SO}(k) = \left\{X\in \mathbf{R}^{k\times k} | X^TX = I_k; \text{det}(X) = 1\right\}$. Before stating formally how good the aforementioned approximation is, we first point out some interesting properties of the reformulation (\ref{cca_opt2}): \begin{inparaenum}[\bfseries (a)] \item in the reparametrization of $U$ and $V$ all components are structured, hence, the search space becomes a subset of $\mathbf{R}^{k\times k}$ \item we can essentially initialize with a PCA solution and then optimize (\ref{cca_opt2}). \end{inparaenum}

{
\textbf{Why (\ref{cca_opt2}) helps?} First, we note that CCA seeks
to maximize the total correlation under the constraint that different components are decorrelated. 
The difficult part in the optimization is to ensure decorrelation, which leads to a higher complexity in existing streaming CCA algorithms. On the contrary, in (\ref{cca_opt2}), we separate (\ref{cca_opt1}) into finding the PCs, $\widetilde{U}, \widetilde{V}$ (by adding the variance maximization terms) and finding the linear combination ($S_uQ_u$ and $S_vQ_v$) of the principal directions. Thus, here we can (almost) utilize an efficient off-the-shelf streaming PCA algorithm. We will defer describing the 
specific  details of the optimization itself until the next sub-section. First, we will show formally why substituting (\ref{cca_opt1}) with (\ref{cca_opt2}) is sensible under some assumptions. 

\textbf{Why the solution of the reformulation makes sense?} 
We start by stating some mild assumptions needed for the analysis. \textbf{Assumptions:} 
\begin{inparaenum}[\bfseries(a)]
\item The random variables $\bold{X} \sim \mathcal{N}(\mathbf{0}, \Sigma_x)$ and $\bold{Y} \sim \mathcal{N}(\mathbf{0}, \Sigma_y)$ with $\Sigma_x \preceq cI_d$ and $\Sigma_y \preceq cI_d$ for some $c>0$.
\item The samples $X$ and $Y$ drawn from $\mathcal{X}$ and $\mathcal{Y}$ respectively have zero mean.
\item For a given $k\leq d$, $\Sigma_x, \Sigma_y$ have non-zero top-$k$ eigen values.
\end{inparaenum}

We show how the presented solution, assuming access to an effective numerical procedure,
approximates the CCA problem presented in (\ref{cca_opt1}).
We formally state the result in the following theorem with a sketch of proof (appendix includes the full proof) {by first stating the following proposition}.

\begin{definition}
\label{def1}
A random variable $\bold{X}$ is called sub-Gaussian if the norm given by 
$\|\bold{X}\|_{\star} := \inf\left\{d\geq 0 | \mathbf{E}_{\bold{X}}\left[\exp\left(\sfrac{\text{trace}(X^TX)}{d^2}\right)\right]\leq 2\right\}$ is finite. Let $U \in \mathbf{R}^{d\times k}$, then $\bold{X}U$ is sub-Gaussian \cite{vershynin2017four}. 
\end{definition}
}

\begin{proposition}[\cite{reiss2020nonasymptotic}]
\label{prop2}
Let $\bold{X}$ be a random variable which
follows a sub-Gaussian distribution.
Let $\widehat{X}$ be the approximation of $X \in \mathbf{R}^{N\times d}$ (samples drawn from $\mathcal{X}$)
with the top-$k$ principal vectors.
Let $\widetilde{C}_X$ be the covariance of $\widehat{X}$.
Also, assume that $\lambda_i$ is the $i^{th}$ eigen value of $C_X$ for {$i = 1, \cdots, d-1$} and $\lambda_i \geq \lambda_{i+1}$ for all $i$.
Then, the PCA reconstruction error, denoted by $\mathcal{E}_k = \|X-\widehat{X}\|$ (in the Frobenius norm sense) can be upper bounded as follows
\begin{align*}
\mathcal{E}_k \leq \min\left(\sqrt{2k}\|\Delta\|_2, \frac{2\|\Delta\|_2^2}{\lambda_k - \lambda_{k+1}}\right), \quad \mbox{ $\Delta = C_X - \widetilde{C}_X$.}
\end{align*}
\end{proposition}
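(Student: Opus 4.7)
The plan is to rewrite $\mathcal{E}_k = \|X - \widehat{X}\|_F$ in terms of the difference $P - \widetilde{P}$ between the orthogonal projector $P$ onto the top-$k$ eigenspace of $C_X$ and the orthogonal projector $\widetilde{P}$ onto the top-$k$ eigenspace of $\widetilde{C}_X$, and then bound this difference via matrix perturbation theory in the perturbation $\Delta = C_X - \widetilde{C}_X$. Since the claim is a minimum of two quantities, I would treat the two branches separately: the first using only rank / norm-conversion arguments, and the second using a Davis--Kahan-type eigengap argument.

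For the first branch $\mathcal{E}_k \le \sqrt{2k}\,\|\Delta\|_2$, the key observations are: (i) $P - \widetilde{P}$ is the difference of two rank-$k$ orthogonal projectors, so it has rank at most $2k$ and consequently $\|A(P - \widetilde{P})\|_F \le \sqrt{2k}\,\|A(P - \widetilde{P})\|_2$ for any compatible $A$; and (ii) a direct computation using the sub-Gaussian concentration from Definition~\ref{def1} allows me to replace the random quantity $\|X(P - \widetilde{P})\|_F$ by a covariance-level expression in which $\|C_X^{1/2}(P - \widetilde{P})\|_2$ is bounded by a constant multiple of $\|\Delta\|_2$ (using that $P$ and $\widetilde{P}$ are the top-$k$ eigenprojectors of matrices differing by $\Delta$). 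Combining (i) and (ii) gives the $\sqrt{2k}\,\|\Delta\|_2$ bound.

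For the second branch $\mathcal{E}_k \le 2\|\Delta\|_2^2/(\lambda_k - \lambda_{k+1})$, I would invoke the Davis--Kahan $\sin\Theta$ theorem, applicable under Assumption~(c), which guarantees a positive eigengap $g := \lambda_k - \lambda_{k+1} > 0$. The theorem yields $\|\sin\Theta(P,\widetilde{P})\|_2 \le \|\Delta\|_2/g$, equivalently $\|P - \widetilde{P}\|_2 \le \sqrt{2}\,\|\Delta\|_2/g$. To promote this linear-in-$\|\Delta\|_2$ bound into a quadratic one, I would apply the Sylvester-equation identity $C_X P\widetilde{P}^{\perp} - P\widetilde{P}^{\perp} C_X = P\Delta \widetilde{P}^{\perp} - (\text{symmetric})$, whose left-hand side is of order $g\,\|P\widetilde{P}^{\perp}\|$ by the spectral-gap assumption; rearranging and iterating this estimate against the reconstruction norm picks up a second factor of $\|\Delta\|_2/g$, giving the advertised $\|\Delta\|_2^2$ in the numerator.

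The main obstacle I anticipate is the transition from the random data matrix $X$ to the deterministic covariance-level perturbation $\Delta$: a naive bound would leak an extraneous $\sqrt{N}$ factor into the right-hand side. This is exactly where the sub-Gaussian assumption of Definition~\ref{def1} is essential, justifying $\|XM\|_F^2 \approx N\cdot \mathrm{trace}(M^T C_X M)$ with high probability so that the Frobenius reconstruction norm reduces cleanly to spectral-norm quantities in $\Delta$. A secondary but delicate point is that the identity $\|P - \widetilde{P}\|_F^2 = 2\|\sin\Theta(P,\widetilde{P})\|_F^2$ requires the two projectors to share the common rank $k$; this is ensured by combining Assumption~(c) with Weyl's inequality in the small-perturbation regime $\|\Delta\|_2 < g/2$, which is the regime in which the sharper second branch of the minimum is the active one.
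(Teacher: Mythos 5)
First, a point of reference: the paper does not prove this proposition at all --- it is imported verbatim (with attribution) from \cite{reiss2020nonasymptotic}, and the appendix only proves Theorem~\ref{prop3}, which \emph{uses} Proposition~\ref{prop2} as a black box. So there is no in-paper argument to compare against; your sketch has to stand on its own as a reconstruction of the Rei{\ss}--Wahl result, and as written it has a genuine gap. The quantity being bounded must be read as the \emph{excess} reconstruction error $\langle C_X, P-\widetilde{P}\rangle$ (the literal $\|X-\widehat{X}\|_F$ is of order $\sqrt{N\sum_{i>k}\lambda_i}$ and does not vanish even when $\Delta=0$), and the crucial ingredient that makes both branches work is the variational optimality of $\widetilde{P}$ as the top-$k$ eigenprojector of $\widetilde{C}_X$: since $\langle \widetilde{C}_X,\widetilde{P}\rangle\ge\langle \widetilde{C}_X,P\rangle$, one gets $\langle C_X,P-\widetilde{P}\rangle\le\langle C_X-\widetilde{C}_X,P-\widetilde{P}\rangle=\langle\Delta,P-\widetilde{P}\rangle$, after which H\"older with the trace norm of a difference of two rank-$k$ projectors yields the gap-free branch, and the self-bounding inequality $\langle C_X,P-\widetilde{P}\rangle\ge\tfrac{1}{2}(\lambda_k-\lambda_{k+1})\|P-\widetilde{P}\|_F^2$ yields the quadratic branch. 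Your sketch never invokes this optimality, and that is not a cosmetic omission.

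Concretely, step (ii) of your first branch --- bounding $\|C_X^{1/2}(P-\widetilde{P})\|_2$ by a constant multiple of $\|\Delta\|_2$ ``using that $P$ and $\widetilde{P}$ are the top-$k$ eigenprojectors of matrices differing by $\Delta$'' --- is a Davis--Kahan-type statement and is simply false without an eigengap, yet the $\sqrt{2k}\,\|\Delta\|_2$ branch is exactly the bound that must survive when $\lambda_k-\lambda_{k+1}$ is zero or tiny. Take $C_X=I_2$, $k=1$, $\widetilde{C}_X=\mathrm{diag}(1-\epsilon,1+\epsilon)$: then $\|\Delta\|_2=\epsilon$ but $\|C_X^{1/2}(P-\widetilde{P})\|_2=1$. (The true excess risk is $0$ in this example --- precisely the cancellation that the optimality inequality captures and a pure projector-perturbation bound cannot.) Your second branch has the right external shape (Davis--Kahan plus a mechanism to upgrade linear to quadratic in $\|\Delta\|_2$), but the upgrade step --- ``rearranging and iterating this estimate \ldots picks up a second factor'' --- is not an argument one can check; the clean route is the pair of inequalities above, which immediately give $\|P-\widetilde{P}\|_F\le 2\|\Delta\|_2\|P-\widetilde{P}\|_{S_1}/(g\|P-\widetilde{P}\|_F)$ and hence the $\|\Delta\|_2^2/g$ scaling. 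Finally, note that the rank-$k$ equality of the two projectors that you derive from Weyl's inequality in the small-perturbation regime is automatic here by construction ($\widetilde{P}$ is \emph{defined} as a top-$k$ eigenprojector), so that worry can be dropped.
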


{
The aforementioned proposition suggests that the error between the data matrix $X$ and the reconstructed data matrix $\widehat{X}$ using the top-$k$ principal vectors is bounded.

Recall from (\ref{cca_opt1}) and (\ref{cca_opt2}) that the optimal value of the true and approximated CCA objective is denoted by $F$ and $\widetilde{F}$ respectively. The following theorem states that we can bound the error, $E=\|F-\widetilde{F}\|$ (proof in the appendix). In other words, if we start from PCA solution and can successfully optimize (\ref{cca_opt2}) without 
leaving the feasible set, we will 
obtain a good solution. 
}

\begin{theorem}
\label{prop3}
Using the hypothesis and assumptions above, the approximation error $E = \|F-\widetilde{F}\|$ is bounded and goes to zero while the {\it whitening constraints} in \eqref{cca_constraints2} are satisfied.
\end{theorem}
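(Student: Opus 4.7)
My strategy is to eliminate the structured nuisance variables $S_u, Q_u, S_v, Q_v$ in closed form, thereby reducing both $F$ and $\widetilde F$ to sums of top-$k$ singular values of operators living in the same ambient space, and then to apply a matrix perturbation argument anchored by Proposition~\ref{prop2}. For fixed $\widetilde U \in \textsf{St}(k,d)$ and $U = \widetilde U S_u Q_u$, the whitening constraint $U^T C_X U = I_k$ forces $S_u$ to be the upper-triangular Cholesky factor of $(\widetilde U^T C_X \widetilde U)^{-1}$, an invertible matrix under Assumption (c); $Q_u\in\textsf{SO}(k)$ remains free, and analogously for the $V$ side. Substituting back and maximizing over $Q_u, Q_v$ via von Neumann's trace inequality gives $\widetilde F = \sum_{i=1}^{k}\sigma_i(M)$ with
\begin{equation*}
M \;=\; \bigl(\widetilde U^T C_X \widetilde U\bigr)^{-1/2}\,\widetilde U^T C_{XY}\widetilde V\,\bigl(\widetilde V^T C_Y \widetilde V\bigr)^{-1/2}\in\mathbf{R}^{k\times k},
\end{equation*}
while the classical CCA formula recalled in the paper gives $F = \sum_{i=1}^k \sigma_i(T)$ with $T = C_X^{-1/2}C_{XY}C_Y^{-1/2}$.

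The second step leverages the role of $\widetilde F_{\text{pca}}$, which is independently maximized when $\widetilde U, \widetilde V$ span the top-$k$ principal subspaces of $C_X, C_Y$; I take this as the operating point, with projections $P_x := \widetilde U\widetilde U^T$ and $P_y := \widetilde V\widetilde V^T$ and PCA reconstructions $\widehat X = XP_x,\ \widehat Y = YP_y$. Proposition~\ref{prop2}, valid under the sub-Gaussian Assumption (a), yields $\|X - \widehat X\|_F, \|Y - \widehat Y\|_F \le \mathcal E_k$. A direct Cauchy-Schwarz bound then upgrades this to $\|C_{XY} - P_x C_{XY} P_y\|_{\text{op}} = O(\mathcal E_k)$ and $\|C_X - P_x C_X P_x\|_{\text{op}} = O(\mathcal E_k)$, and likewise for $C_Y$. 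Lifting $M$ back into $\mathbf{R}^{d\times d}$ via $\iota(M) := \widetilde U M \widetilde V^T$, one combines these with Weyl's inequality on the spectra of $C_X, C_Y$ (using Assumption (c) to keep $\lambda_k$ bounded away from zero and Assumption (a) to keep the spectrum bounded above by $c$) to bound the inverse-square-root perturbation and conclude $\|T - \iota(M)\|_{\text{op}} = O\bigl(\mathcal E_k/(\lambda_k-\lambda_{k+1})\bigr)$.

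To finish, since $\iota(M)$ has rank at most $k$, $\sigma_i(\iota(M)) = \sigma_i(M)$ for $i \le k$, and Mirsky's inequality gives $|\sigma_i(T) - \sigma_i(\iota(M))| \le \|T - \iota(M)\|_{\text{op}}$, whence
\begin{equation*}
E \;=\; |F - \widetilde F| \;\le\; k\,\|T - \iota(M)\|_{\text{op}} \;=\; O\!\left(\frac{k\,\mathcal E_k}{\lambda_k - \lambda_{k+1}}\right),
\end{equation*}
which vanishes as $\mathcal E_k \to 0$ (e.g., as the tail eigenvalues of $C_X, C_Y$ decay, per Proposition~\ref{prop2}). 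The whitening constraints $U^T C_X U = V^T C_Y V = I_k$ hold \emph{exactly} throughout by the explicit construction of $S_u, S_v$ in Step~1, so we never leave the feasible set.

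\textbf{Anticipated obstacle.} The delicate piece is the perturbation of $C_X^{-1/2}$: the map $A \mapsto A^{-1/2}$ is only Lipschitz on matrices with a uniform positive lower bound on the spectrum, which is precisely the role of Assumption (c). Without this spectral gap, $\iota(M)$ can drift arbitrarily far from $T$ even when $\mathcal E_k$ is small, and the argument would only bound a \emph{projected} objective rather than recover the full top-$k$ CCA value. Making the gap dependence quantitative—and tracking how it interacts with the stochastic Riemannian SGD steps that drive $\widetilde U, \widetilde V$ toward the true top-$k$ subspaces—is where the technical care is concentrated.
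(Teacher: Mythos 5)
Your route is genuinely different from the paper's. You eliminate $S_u,Q_u,S_v,Q_v$ in closed form so that $\widetilde F$ becomes $\sum_{i\le k}\sigma_i(M)$ for the compressed whitened cross-covariance $M$, and then compare the spectra of $M$ and $T$ by operator perturbation. The paper never optimizes over $S,Q$: it evaluates $\widetilde F$ at the particular feasible point $S_uQ_u=\widetilde U^TU^*$, $S_vQ_v=\widetilde V^TV^*$ (so $U=\widetilde U\widetilde U^TU^*$), rewrites $F-\widetilde F$ as $\frac1N\,\mathrm{trace}\bigl((U^*)^T(X^TY-\widehat X^T\widehat Y)V^*\bigr)$, bounds it by von Neumann plus Cauchy--Schwarz in terms of the reconstruction errors $\|\widehat X U^*-XU^*\|_F$ and $\|\widehat Y V^*-YV^*\|_F$, and applies Proposition~\ref{prop2} with $\Delta_x=I_k-C(\widehat XU^*)$; the conclusion $E\to0$ is then explicitly conditional on the whitening constraints \eqref{cca_constraints2} driving $C(\widehat XU^*)$ and $C(\widehat YV^*)$ to $I_k$.

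Your perturbation step, however, has a genuine gap, and the bound you state is false as an unconditional claim. The inequality $\|T-\iota(M)\|_{\mathrm{op}}=O(\mathcal E_k/(\lambda_k-\lambda_{k+1}))$ requires comparing $C_X^{-1/2}$ with its compression $\widetilde U(\widetilde U^TC_X\widetilde U)^{-1/2}\widetilde U^T$; the difference contains the tail $\sum_{i>k}\lambda_i^{-1/2}u_iu_i^T$, whose operator norm is $\lambda_{\min}(C_X)^{-1/2}$ --- large, not small, precisely when the discarded principal directions have small variance. Assumption (c) only keeps the top-$k$ eigenvalues away from zero; it says nothing about $\lambda_{k+1},\dots,\lambda_d$. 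Concretely, take $d=2$, $k=1$, $C_X=C_Y=\mathrm{diag}(1,\epsilon)$, $C_{XY}=\mathrm{diag}(0,\epsilon)$: then $T=\mathrm{diag}(0,1)$ so $F=1$, while the top-$1$ principal subspace is spanned by $e_1$ and $M=0$, so $\widetilde F=0$ and $E=1$, yet $\mathcal E_k=O(\epsilon)$ and $\lambda_1-\lambda_2=1-\epsilon$, so your bound would force $E=O(\epsilon)$. What fails is not the Lipschitz continuity of $A\mapsto A^{-1/2}$ on the top block but the fact that whitening can place all of the canonical correlation in exactly the directions PCA throws away. This is the failure mode the paper's theorem excludes by making its conclusion conditional: in the example, $U=\widetilde U\widetilde U^TU^*=0$ violates $U^TC_XU=I_k$, so the paper's hypothesis is simply not met. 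To repair your argument you would need either an additional assumption tying the canonical subspaces to the top-$k$ principal subspaces (or a lower bound on $\lambda_{\min}(C_X),\lambda_{\min}(C_Y)$ comparable to $\lambda_k$), or you would need to retreat to the paper's conditional formulation in which the error is controlled by $\|I_k-C(\widehat XU^*)\|$ and $\|I_k-C(\widehat YV^*)\|$ rather than by $\mathcal E_k$ alone.
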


\begin{proof}[Sketch of the Proof]
Let $U^*$ and $V^*$ be the true solution of CCA, i.e., of (\ref{cca_opt1}). Let $U = \widetilde{U}S_uQ_u,  V = \widetilde{V}S_vQ_v$ be the solution of (\ref{cca_opt2}), with $\widetilde{U}, \widetilde{V}$ be the PCA solutions of $X$ and $Y$ respectively. Let $\widehat{X} = X\widetilde{U}\widetilde{U}^T$ and $\widehat{Y} = Y\widetilde{V}\widetilde{V}^T$ be the reconstruction of $X$ and $Y$ using principal vectors. Let $S_uQ_u = \widetilde{U}^TU^*$ and $S_vQ_v = \widetilde{V}^TV^*$. Then we can write $\widetilde{F} = \text{trace}\left(U^TC_{XY}V\right)$ $= \text{trace}\left(\frac{1}{N} \left(\widehat{X}U^*\right)^T\widehat{Y}V^*\right)$. Similarly we can write $F = \text{trace}\left(\frac{1}{N} \left(XU^*\right)^TYV^*\right)$. As, $\widehat{X}$ and $\widehat{Y}$ are the approximation of $X$ and $Y$ respectively using principal vectors, we use proposition \ref{prop2} to bound the error  $\|F - \widetilde{F}\|$. Now observe that $\widehat{X}U$ can be rewritten into $X\widetilde{U}\widetilde{U}^TU$ (similar for $\widehat{Y}V$). Thus, as long as the solution $S_uQ_u$ and $S_vQ_v$ respectively well-approximate $\widetilde{U}^TU$ and $\widetilde{V}^TV$,  $\widetilde{F}$ is a good approximation of $F$. 
\end{proof}

Now, the only unresolved issue is an optimization scheme for
\eqref{cca_obj2} that {keeps the constraints in \eqref{cca_constraints2} satisfied by leveraging the geometry of the structured solution space}. 

\vspace{-5pt}
\subsection{How to numerically optimize (\ref{cca_obj2}) satisfying constraints in (\ref{cca_constraints2})?}
{\bf Overview.} We now describe how to maximize the formulation in (\ref{cca_obj2})--(\ref{cca_constraints2}) with respect
to $\widetilde{U}$, $\widetilde{V}$, $Q_u$, $Q_v$, $S_u$ and $S_v$. 
%
%
We will
first compute top-$k$ principal vectors to get $\widetilde{U}$ and $\widetilde{V}$. Then, 
we will use a gradient update rule to solve for $Q_u$, $Q_v$, $S_u$ and $S_v$ {to improve the objective}.
Since all these matrices are ``structured'', 
care must be taken to ensure that the matrices {\em remain on their respective manifolds} -- which is where 
the geometry of the manifolds will offer desirable properties. 
We re-purpose a Riemannian stochastic gradient descent (RSGD) to do this,
so call our algorithm {\it RSG+}. Of course,
more sophisticated Riemannian optimization techniques can be
substituted in. {For instance, different Riemannian optimization methods are available in \cite{Absil2007OptimizationAO} and optimization schemes for many manifolds are offered in PyManOpt \cite{boumal2014manopt}.} 

The algorithm block is presented in Algorithm \ref{sec3:alg2}.  
{Recall $\widetilde{F}_{\text{pca}} = \text{trace}\left(U^TC_XU)\right) + \text{trace}\left(V^TC_YV)\right)$ be the 
contribution from the principal directions which we used to ensure the ``whitening constraint''. Moreover, $\widetilde{F} = \text{trace}\left(U^TC_{XY}V\right)$ be the contribution from the canonical correlation directions ({\it note that we use the subscript 'cca' for making CCA objective explicit}).} 
The algorithm consists of four main blocks denoted by different colors,
namely
\begin{inparaenum}[\bfseries (a)]
\item the {\color{red!70} Red} block {deals with gradient calculation of the objective function where we calculate the top-$k$ principal vectors (denoted by $\widetilde{F}_{\text{pca}}$) with respect to $\widetilde{U}$, $\widetilde{V}$};
\item the {\color{green!70} Green} block describes
  calculation of the gradient {corresponding to the canonical directions (denoted by $\widetilde{F}$) with respect to $\widetilde{U}$, $\widetilde{V}$, $S_u$, $S_v$, $Q_u$ and $Q_v$};
\item the {\color{gray!70} Gray} block {combines the gradient computation from both $\widetilde{F}_{\text{pca}}$ and $\widetilde{F}$ with respect to unknowns $\widetilde{U}$, $\widetilde{V}$, $S_u$, $S_v$, $Q_u$ and $Q_v$}; and finally
\item the {\color{blue!70} Blue} block performs a
  batch update of the canonical directions $\widetilde{F}$ {using Riemannian gradient updates}.
\end{inparaenum}

{\bf Gradient calculations.} The gradient update for $\widetilde{U}, \widetilde{V}$
is divided into two parts
\begin{inparaenum}[\bfseries (a)]
\item The ({\color{red!70} Red} block) gradient updates the ``principal''
  directions (denoted by $\nabla_{\widetilde{U}} {\widetilde{F}_{\text{pca}}}$ and $\nabla_{\widetilde{V}} {\widetilde{F}_{\text{pca}}}$),
  which is specifically designed to satisfy the {\it whitening constraint}. {Since this requires updating the principal subspaces, so, the gradient descent needs to proceed on the manifold of $k$-dimensional subspaces of $\mathbf{R}^d$, i.e., on the Grassmannian  $\textsf{Gr}(k, d)$.}
\item The ({\color{green!70} green} block) gradient
  from the objective function in (\ref{cca_opt2}),
  is denoted by $\nabla_{\widetilde{U}} {\widetilde{F}}$ and $\nabla_{\widetilde{V}} {\widetilde{F}}$.
\end{inparaenum}
{In order to ensure that the Riemannian gradient update for $\widetilde{U}$ and $\widetilde{V}$ stays on the manifold $\textsf{St}(k,d)$, we need to make sure that the gradients, i.e., $\nabla_{\widetilde{U}} {\widetilde{F}}$ and $\nabla_{\widetilde{V}} {\widetilde{F}}$ lies in the tangent space of $\textsf{St}(k,d)$.}  
{In order to do that, we need to first calculate the Euclidean gradient and then project on to the tangent space of $\textsf{St}(k,d)$.}

The gradient updates for $Q_u, Q_v, S_u, S_v$ are given in the {\color{green!70} green} block, denoted by $\nabla_{Q_u} {\widetilde{F}}$, $\nabla_{Q_v} {\widetilde{F}}$, $\nabla_{S_u} {\widetilde{F}}$ and $\nabla_{S_v} {\widetilde{F}}$. Note that unlike  the previous step, this gradient only has components from {canonical correlation computation}. As before, this step
requires first computing the Euclidean gradient and then projecting  on to the tangent space of the underlying Riemannian manifolds involved, {i.e., $\textsf{SO}(k)$ and the space of upper triangular matrices}.

Finally, we get the gradient to update the canonical directions by combining the gradients which is shown in {\color{gray!70} gray} block. With these gradients we can perform a
batch update as shown in the {\color{blue!70} blue} block. A schematic diagram is given in Fig. \ref{cca_flow}.

\begin{wrapfigure}[15]{r}{0.5\columnwidth}
  \begin{center}
    \includegraphics[width=0.5\columnwidth]{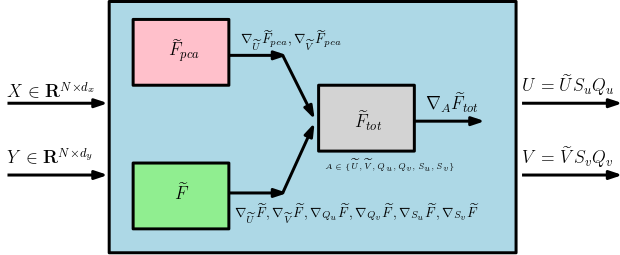}
  \end{center}
  \caption{Schematic diagram of the proposed CCA algorithm, here $\widetilde{F}_{\text{tot}} = \widetilde{F} +  \widetilde{F}_{\text{pca}}$, where $\widetilde{F}$ is the approximated objective value for CCA (as in (\ref{cca_opt2}))}
  \label{cca_flow}
\end{wrapfigure}
%
%
Using convergence
results presented next in Propositions \ref{prop4}--\ref{prop5}, this scheme can be shown
(under some assumptions) to approximately optimize the CCA objective in (\ref{cca_opt1}).

We can now move to the convergence properties of the algorithm.
We present two results
stating the asymptotic proof of convergence for 
{\color{red!30} top-$k$ principal vectors} and
{\color{green!30} canonical directions} in the algorithm.
\begin{proposition}[\cite{chakraborty2020intrinsic}]
\label{prop4}
(Asymptotically) If the samples, $X$, are drawn from a Gaussian distribution, then the gradient update rule  presented in Step \ref{gr_update} in Algorithm \ref{sec3:alg2} returns an
orthonormal basis -- the top-$k$ principal vectors of the covariance matrix $C_X$.
\end{proposition}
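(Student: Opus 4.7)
The plan is to interpret Step~\ref{gr_update} as a stochastic Riemannian gradient descent on the Grassmannian $\textsf{Gr}(k,d_x)$ for the Fr\'echet variance functional, then invoke the convergence theorem of \cite{chakraborty2020intrinsic}, and finally identify the resulting intrinsic mean with the span of the top-$k$ eigenvectors of $C_X$ using the rotational symmetry of the Gaussian.

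\paragraph{Step 1: Recast the update as a Fr\'echet-mean iteration.} First I would observe that each $\tilde Z^x_l$ is produced by orthogonalizing a $d_x\times k$ block drawn from $X$, and so its column space $\mathcal{Z}^x_l$ is a point of $\textsf{Gr}(k,d_x)$ that is an i.i.d.\ sample from a distribution $\mu_X$ on the Grassmannian induced by $\mathcal{N}(\mathbf{0},\Sigma_x)$. The quantity $-\sum_l \textsf{Exp}^{-1}_{\widetilde U}(\mathcal{Z}^x_l)$ is, up to a factor of $1/L$, exactly the Riemannian gradient of the empirical Fr\'echet variance $\widetilde U\mapsto \tfrac12 \sum_l d_{\textsf{Gr}}^2(\widetilde U,\mathcal{Z}^x_l)$, because $-\textsf{Exp}^{-1}_p(q)$ is the Riemannian gradient of $\tfrac12 d^2(\cdot,q)$ at $p$ whenever $q$ lies in the injectivity radius of $p$. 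Thus Step~\ref{gr_update} is a streaming intrinsic-mean update on $\textsf{Gr}(k,d_x)$.

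\paragraph{Step 2: Invoke the streaming convergence result.} With this identification, I can appeal directly to \cite{chakraborty2020intrinsic}, which shows that such a streaming Riemannian gradient descent on the Grassmannian, under Robbins--Monro step sizes and mild moment conditions on $\mu_X$ (both of which are inherited from the sub-Gaussian hypothesis on $X$ via Definition~\ref{def1}), converges almost surely to the population Fr\'echet mean
\begin{equation*}
\widetilde U^\star \;=\; \arg\min_{U\in \textsf{Gr}(k,d_x)}\; \mathbb{E}_{\mathcal{Z}\sim \mu_X}\!\bigl[d_{\textsf{Gr}}^2(U,\mathcal{Z})\bigr].
\end{equation*}
The sub-Gaussian integrability guarantees uniqueness of this minimizer in a Karcher-convex neighborhood, so the algorithm's asymptotic limit is well defined.

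\paragraph{Step 3: Identify the Fr\'echet mean with the principal subspace.} The remaining task is to show $\widetilde U^\star$ spans the top-$k$ eigenspace of $C_X$. Using the chordal--geodesic equivalence $d_{\textsf{Gr}}^2(U,\mathcal{Z})=\sum_i\theta_i^2$ with principal angles $\theta_i$, and expanding near the minimizer, the Fr\'echet objective is a smooth increasing function of the misalignment between $U$ and the random $\mathcal{Z}$. Passing to the eigenbasis of $\Sigma_x$, the Gaussian assumption makes the conditional distribution of $\mathcal{Z}$ invariant under orthogonal transformations that commute with $\Sigma_x$; this rotational symmetry forces every critical point of the Fr\'echet functional to be a coordinate subspace in the eigenbasis of $\Sigma_x$. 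Among these, direct computation (or a straightforward eigenvalue-interchange argument) shows the minimum is attained at the subspace spanned by the eigenvectors associated with the $k$ largest eigenvalues; the non-degeneracy assumption on the top-$k$ eigenvalues of $\Sigma_x$ (assumption (c) in the setup) rules out tie-breaking ambiguity up to the usual choice of basis within the top-$k$ eigenspace.

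\paragraph{Main obstacle.} The delicate step is Step~3: turning the informal statement \emph{random orthonormalized Gaussian blocks concentrate around the high-variance directions} into a rigorous identification of the Grassmannian Fr\'echet mean with the top-$k$ PCA subspace. One has to handle the fact that $d_{\textsf{Gr}}$ is not globally smooth (the cut locus intervenes when principal angles reach $\pi/2$), so the symmetry/eigenvalue argument must either be localized to a Karcher-convex neighborhood or stated via the extrinsic (projection-Frobenius) surrogate and then transferred back. The rest of the argument is essentially bookkeeping given \cite{chakraborty2020intrinsic}.
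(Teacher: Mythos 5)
The paper never proves this proposition: it is imported wholesale from \cite{chakraborty2020intrinsic}, so there is no in-paper argument to compare against. Your reconstruction does follow the structure of that reference --- read Step \ref{gr_update} as a stochastic Riemannian gradient step for the Fr\'echet variance on $\textsf{Gr}(k,d_x)$, apply a Robbins--Monro--type convergence theorem, and identify the limiting intrinsic Grassmann average with the top-$k$ eigenspace via the orthogonal symmetry of the Gaussian --- and Steps 1 and 3 name the right ingredients.

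Two of your justifications would not survive scrutiny as written, however. First, in Step 2 you assert that ``sub-Gaussian integrability guarantees uniqueness of this minimizer in a Karcher-convex neighborhood.'' It does not: the induced distribution of the span of $k$ orthogonalized Gaussian samples has full support on the compact manifold $\textsf{Gr}(k,d_x)$, so the standard Karcher/Afsari support conditions for uniqueness of a Fr\'echet mean fail outright; whatever uniqueness holds must be extracted from the Gaussian symmetry together with the eigengap assumption --- it is part of the work of Step 3, not a free consequence of integrability. Second, the convergence machinery you are implicitly invoking (Bonnabel's theorem, which is exactly the paper's Proposition \ref{prop5}) delivers only that the Riemannian gradient tends to zero, i.e., convergence to a critical point of a nonconvex functional, not almost-sure convergence to the global Fr\'echet minimizer. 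Your symmetry argument produces the set of candidate critical points (the invariant coordinate subspaces in the eigenbasis of $\Sigma_x$), but you must additionally rule out convergence to the suboptimal ones, e.g., by showing the non-top-$k$ coordinate subspaces are saddles/unstable for the stochastic iteration or by an initialization argument. Your ``main obstacle'' paragraph flags the cut-locus/smoothness issue but not these two, and they are precisely the points the cited reference has to carry.
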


\begin{proposition}(\cite{bonnabel2013stochastic})
\label{prop5}
Consider a connected Riemannian manifold $\mathcal{M}$ with injectivity radius bounded from below by $I >0$. Assume that
the sequence of step sizes $\left(\gamma_l\right)$ satisfy the condition \begin{inparaenum}[(a)] \item $\sum \gamma_l^2 < \infty$ \item $\sum \gamma_l = \infty$\end{inparaenum}. Suppose $\left\{A_l\right\}$ lie in a compact set $K \subset \mathcal{M}$. We also suppose that $\exists D >0$ such that, $g_{A_l}\left(\nabla_{A_l} {\widetilde{F}}, \nabla_{A_l}\ {\widetilde{F}}\right)\leq D$. Then $\nabla_{A_l}\ {\widetilde{F}} \rightarrow 0$ and $l\rightarrow \infty$. 
\end{proposition}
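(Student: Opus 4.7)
The result is a Riemannian adaptation of the classical Robbins-Monro convergence theorem for stochastic approximation. The plan is to follow the standard strategy: establish a Riemannian ``descent lemma'' via Taylor expansion along the exponential map, combine it with the Robbins-Siegmund supermartingale theorem to get summable gradient-squared mass, and then upgrade from $\liminf$ to $\lim$ via compactness. The injectivity radius bound $I > 0$ and compactness of $K$ are used throughout to ensure that geometric estimates are uniform across all iterates. Throughout, the iterates are taken to be $A_{l+1} = \text{Exp}_{A_l}(-\gamma_l \hat{g}_l)$ with $\hat{g}_l$ an unbiased stochastic estimate of $\nabla_{A_l}\widetilde{F}$.

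First, I would establish the descent inequality. Because $\widetilde{F}$ is smooth on the compact set $K$ and the injectivity radius is uniformly bounded below, there exists a constant $L$ (absorbing both the Lipschitz constant of the Riemannian gradient along geodesics and curvature-induced distortion of the exponential map) such that
$$\widetilde{F}(A_{l+1}) \;\leq\; \widetilde{F}(A_l) - \gamma_l\, g_{A_l}\bigl(\nabla_{A_l}\widetilde{F},\hat{g}_l\bigr) + \tfrac{L}{2}\gamma_l^2\, g_{A_l}(\hat{g}_l,\hat{g}_l).$$
Taking conditional expectation given the past $\sigma$-algebra $\mathcal{F}_l$ and using unbiasedness of $\hat{g}_l$ along with the uniform gradient bound $D$,
$$\mathbb{E}[\widetilde{F}(A_{l+1}) \mid \mathcal{F}_l] \;\leq\; \widetilde{F}(A_l) - \gamma_l\, g_{A_l}\bigl(\nabla_{A_l}\widetilde{F},\nabla_{A_l}\widetilde{F}\bigr) + \tfrac{LD}{2}\gamma_l^2.$$

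Second, I would invoke the Robbins-Siegmund almost-sure convergence theorem for non-negative quasi-supermartingales. Since $\sum_l \gamma_l^2 < \infty$ makes the perturbation $\tfrac{LD}{2}\gamma_l^2$ summable, this immediately yields (i) almost-sure convergence of $\widetilde{F}(A_l)$ to a finite limit, and (ii) summability $\sum_l \gamma_l\, g_{A_l}\bigl(\nabla_{A_l}\widetilde{F},\nabla_{A_l}\widetilde{F}\bigr) < \infty$ almost surely. Combined with $\sum_l \gamma_l = \infty$, this forces $\liminf_l \|\nabla_{A_l}\widetilde{F}\|_{A_l} = 0$.

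The final and subtlest step, where the main obstacle lies, is upgrading from $\liminf$ to $\lim$ so that the full sequence $\|\nabla_{A_l}\widetilde{F}\|_{A_l}$ vanishes. The standard argument is by contradiction: if some subsequence had gradient norm exceeding $\eta > 0$, then by compactness of $K$ and uniform continuity of the Riemannian gradient (using parallel transport to identify nearby tangent spaces, again relying on $I > 0$), the gradient would stay above $\eta/2$ on a window of indices whose cumulative $\gamma_l$-mass is bounded below by a positive constant, contradicting the summability from the previous step. The technical difficulty here is handling the geometric distortion in comparing tangent vectors at nearby but distinct base points: this is precisely what the uniform injectivity radius and compactness of $K$ control, ensuring all bounds can be taken uniform over iterations.
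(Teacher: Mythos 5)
The paper offers no proof of Proposition~\ref{prop5}: it is imported verbatim from \cite{bonnabel2013stochastic}, so there is nothing internal to compare against. Your plan correctly reconstructs the argument of that reference --- a Riemannian descent lemma with constants made uniform via compactness of $K$ and the injectivity-radius lower bound, Robbins--Siegmund to get $\sum_l \gamma_l\,\|\nabla_{A_l}\widetilde{F}\|^2 < \infty$ and hence $\liminf_l \|\nabla_{A_l}\widetilde{F}\| = 0$, then an upgrade to a full limit. The one place you diverge from the cited proof is the last step: Bonnabel runs a second quasi-supermartingale argument on $\|\nabla\widetilde{F}(A_l)\|^2$ itself to show it converges almost surely (which, combined with $\liminf = 0$, forces the limit to be $0$), whereas you use the deterministic windowed-crossing contradiction. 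Your version does go through here because the almost-sure bound $\|\hat{g}_l\| \le \sqrt{D}$ controls the per-step displacement deterministically, but it is slightly more delicate in general stochastic settings; the supermartingale route avoids having to compare gradients across a whole window of base points. Two small technicalities you should make explicit if writing this out in full: the descent lemma needs the connecting geodesics (not just the iterates) to stay in a compact set where the Hessian is bounded, which holds since $\gamma_l \to 0$ and the manifolds in play ($\textsf{Gr}$, $\textsf{St}$, $\textsf{SO}$) are compact and complete; and Robbins--Siegmund needs $\widetilde{F}$ bounded below on $K$, which again follows from compactness.
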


Notice that in our problem, the injectivity radius bound in Proposition \ref{prop5} is satisfied as ``$I$'' for  $\textsf{Gr}(p,n)$, $\textsf{St}(p,n)$ or $\textsf{SO}(p)$ is $\pi/2\sqrt{2}, \pi/2\sqrt{2}, \pi/2$ respectively. So, in order to apply Proposition \ref{prop5}, we need to guarantee the step sizes satisfy the aforementioned condition.
One example of the step sizes that satisfies the
property is $\gamma_l = \frac{1}{l+1}$. 

\vspace{-5pt}
\subsection{Convergence rate and complexity of the RSG+ algorithm}
\vspace{-5pt}
In this section, we describe
the convergence rate and complexity of the algorithm proposed
in Algorithm \ref{sec3:alg2}. Observe that the key component of Algorithm \ref{sec3:alg2}
is a Riemannian gradient update. Let $A_t$ be the generic entity needed to be updated in
the algorithm using the Riemannian gradient update
$A_{t+1} = \textsf{Exp}_{A_t}\left(-\gamma_t \nabla_{A_t}\widetilde{F}\right)$,
where $\gamma_t$ is the step size at time step $t$.
Also assume $\left\{A_t\right\}\subset \mathcal{M}$ for a Riemannian manifold $\mathcal{M}$.
The following proposition states that under certain assumptions,
the Riemannian gradient update has a convergence rate of $O\left(\frac{1}{t}\right)$.

\begin{algorithm}
{\small
    \textbf{Input: }{$X \in \mathbf{R}^{N\times d_x}$, $Y \in \mathbf{R}^{N\times d_y}$, $k > 0$}\\
    \textbf{Output: }{$U \in \mathbf{R}^{d_x\times k}$, $V \in \mathbf{R}^{d_y\times k}$}
    
    Initialize $\widetilde{U}, \widetilde{V}, Q_u, Q_v, S_u, S_v$ \\
    Partition data $X, Y$ into batches of size $B$. Let $j^{th}$ batch be denoted by $X_j$ and $Y_j$ \\
    \textbf{for  }$j \in \left\{1, \cdots, \lfloor\frac{N}{B}\rfloor \right\}$ \textbf{  do}\\
           \begin{mybox}{red}{{\bf Gradient for top-$k$ principal vectors}: calculating $\nabla_{\widetilde{U}} \widetilde{F}_{\text{pca}},\nabla_{\widetilde{V}} \widetilde{F}_{\text{pca}}$}
             \begin{compactenum}
             \item Partition $X_j$ ($Y_j$) into $L$ ($L=\lfloor\frac{B}{k}\rfloor$) blocks  of size $d_x\times k$ ($d_y\times k$)\;
             \item Let the $l^{th}$ block be denoted by $Z^x_l$ ($Z^y_l$)\;
             \item Orthogonalize each block and let the orthogonalized block be denoted by $\hat{Z}^x_l$ ($\hat{Z}^y_l$)\;
             \item Let the subspace spanned by each $\hat{Z}^x_l$ (and $\hat{Z}^y_l$) be $\mathcal{\hat{Z}}^x_l \in \textsf{Gr}(k, d_x)$ (and $\mathcal{\hat{Z}}^y_l \in \textsf{Gr}(k, d_y)$)\;
             \end{compactenum}
             {\small 
               \begin{flalign}
               \label{gr_update}
               \nabla_{\widetilde{U}} {\widetilde{F}_{\text{pca}}} = -\sum_{l} \textsf{Exp}^{-1}_{\widetilde{U}}\left(\mathcal{\hat{Z}}^x_l\right) \quad 
               \nabla_{\widetilde{V}} {\widetilde{F}_{\text{pca}}} = -\sum_{l} \textsf{Exp}^{-1}_{\widetilde{V}}\left(\mathcal{\hat{Z}}^y_l\right) 
             \end{flalign}
             }
           \end{mybox}
            
           \begin{mybox}{green}{{\bf Gradient from \eqref{cca_opt2}}:  calculating
               $\nabla_{\widetilde{U}}\widetilde{F},\nabla_{\widetilde{V}}\widetilde{F},\nabla_{Q_u}\widetilde{F},\nabla_{Q_v}\widetilde{F},\nabla_{S_u}\widetilde{F},\nabla_{S_v}\widetilde{F}$}
           $\nabla_{\widetilde{U}}\widetilde{F} = \frac{\partial{\widetilde{F}}}{\partial \widetilde{U}} - \widetilde{U}\frac{\partial {\widetilde{F}}}{\partial \widetilde{U}}^T\widetilde{U} \quad\nabla_{\widetilde{V}}\widetilde{F} = \frac{\partial {\widetilde{F}}}{\partial \widetilde{V}} - \widetilde{V}\frac{\partial {\widetilde{F}}}{\partial \widetilde{V}}^T\widetilde{V}$\\
           $\nabla_{Q_u}\widetilde{F} = \frac{\partial {\widetilde{F}}}{\partial Q_u} -  \frac{\partial {\widetilde{F}}}{\partial Q_u}^T \quad \nabla_{Q_v}\widetilde{F}= \frac{\partial {\widetilde{F}}}{\partial Q_v} -  \frac{\partial {\widetilde{F}}}{\partial Q_v}^T$\\
           $\nabla_{S_u}\widetilde{F}= \text{Upper}\left(\frac{\partial {\widetilde{F}}}{\partial S_u}\right) \quad \nabla_{S_v}\widetilde{F}= \text{Upper}\left(\frac{\partial {\widetilde{F}}}{\partial S_v}\right)$\\
           
             Here, $\text{Upper}$ returns the upper triangular matrix of the input matrix and
             $\frac{\partial{\widetilde{F}}}{\partial \widetilde{U}}, \frac{\partial{\widetilde{F}}}{\partial \widetilde{V}}, \frac{\partial{\widetilde{F}}}{\partial Q_u}, \frac{\partial{\widetilde{F}}}{\partial Q_v}, \frac{\partial{\widetilde{F}}}{\partial S_u}, \frac{\partial{\widetilde{F}}}{\partial S_v}$ give the Euclidean gradients, which are provided in appendix.
            \end{mybox}
            
             \begin{mybox}{gray}{\bf Gradient to update canonical directions}
              $\nabla_{\widetilde{U}}\widetilde{F}_{\text{tot}} = \nabla_{\widetilde{U}}\widetilde{F}_{\text{pca}} + \nabla_{\widetilde{U}}\widetilde{F} \qquad \qquad \nabla_{\widetilde{V}}\widetilde{F}_{\text{tot}} = \nabla_{\widetilde{V}}\widetilde{F}_{\text{pca}} + \nabla_{\widetilde{V}}\widetilde{F}$\;
             $\nabla_{X}\widetilde{F}_{\text{tot}}  =\nabla_{X}\widetilde{F}$ where, $X$ is a generic entity: $X\in \{ Q_u, Q_v, S_u, S_v\}$\;               
             \end{mybox}
             
             \begin{mybox}{blue}{\bf Batch update of canonical directions}
              ${A} = \textsf{Exp}_{{A}}\left(-\gamma_j \nabla_{{A}}\widetilde{F}_{\text{tot}}\right)$ where, $A$ is a generic entity: $A\in \{\widetilde{U}, \widetilde{V}, Q_u, Q_v, S_u, S_v\}$\;               
             \end{mybox}
	\textbf{end for}
     
    $U = \widetilde{U}Q_uS_u$ and $V = \widetilde{V}Q_vS_v$\;
    \caption{\footnotesize{Riemannian SGD based algorithm (RSG+) to compute canonical directions}}
    \label{sec3:alg2}  
    }

\end{algorithm}

\begin{proposition}(\cite{Nemirovski2009RobustSA,becigneul2018riemannian})
\label{prop6}
Let $\left\{A_t\right\}$ lie inside a geodesic ball of radius less than
the minimum of the {\it injectivity radius} and the
{\it strong convexity radius} of $\mathcal{M}$. Assume $\mathcal{M}$ to
be a geodesically complete Riemannian manifold with sectional curvature
lower bounded by $\kappa\leq 0$. Moreover, assume that the step size
$\left\{\gamma_t\right\}$ diverges and the squared step size converges.
Then, the Riemannian gradient descent update given by
$A_{t+1} = \textsf{Exp}_{A_t}\left(-\gamma_t \nabla_{A_t}\widetilde{F}\right)$ with a
bounded $\nabla_{A_t}\widetilde{F}$, i.e., $\|\nabla_{A_t}\widetilde{F}\|\leq C<\infty$ for
some $C\geq 0$, converges in the rate of $O\left(\frac{1}{t}\right)$.
\end{proposition}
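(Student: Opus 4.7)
The plan is to follow the now-standard analysis for Riemannian stochastic gradient methods (Zhang--Sra, Bécigneul--Ganea, Nemirovski et al.), carried out inside the geodesic ball where both a trigonometric distance distortion inequality and geodesic strong convexity hold. The first step would be to invoke the curvature-adjusted law of cosines valid in Alexandrov spaces with sectional curvature bounded below by $\kappa\leq 0$: for any $x\in \mathcal{M}$, tangent vector $v\in T_x\mathcal{M}$, and reference point $y$ inside the geodesic ball,
\begin{equation*}
d^2(\textsf{Exp}_x(v),y) \;\leq\; d^2(x,y) \;-\; 2\langle v,\textsf{Exp}^{-1}_x(y)\rangle_x \;+\; \zeta(\kappa,D)\,\|v\|^2,
\end{equation*}
where $\zeta(\kappa,D) = \sqrt{|\kappa|}D/\tanh(\sqrt{|\kappa|}D)$ and $D$ bounds $d(x,y)$ on the ball. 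Specializing to $v=-\gamma_t\nabla_{A_t}\widetilde{F}$, $x=A_t$, and $y=A^{\star}$ (an optimizer in the same ball), and using the uniform gradient bound $\|\nabla_{A_t}\widetilde{F}\|\leq C$, I obtain the one-step recursion
\begin{equation*}
d^2(A_{t+1},A^{\star}) \;\leq\; d^2(A_t,A^{\star}) \;+\; 2\gamma_t \langle \nabla_{A_t}\widetilde{F},\,\textsf{Exp}^{-1}_{A_t}(A^{\star})\rangle \;+\; \zeta(\kappa,D)\,C^2\gamma_t^2.
\end{equation*}

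The second step is to bring in geodesic strong convexity of $\widetilde{F}$, which holds on the ball because its radius is below the strong convexity radius of $\mathcal{M}$: for some $\mu>0$,
\begin{equation*}
\widetilde{F}(A^{\star}) \;\geq\; \widetilde{F}(A_t) + \langle \nabla_{A_t}\widetilde{F},\,\textsf{Exp}^{-1}_{A_t}(A^{\star})\rangle + \tfrac{\mu}{2}d^2(A_t,A^{\star}).
\end{equation*}
Rearranging the strong-convexity bound and substituting the resulting estimate on $\langle \nabla_{A_t}\widetilde{F},\textsf{Exp}^{-1}_{A_t}(A^{\star})\rangle$ into the distance recursion gives
\begin{equation*}
d^2(A_{t+1},A^{\star}) \;\leq\; (1-\mu\gamma_t)\,d^2(A_t,A^{\star}) \;-\; 2\gamma_t\bigl(\widetilde{F}(A_t)-\widetilde{F}(A^{\star})\bigr) \;+\; \zeta(\kappa,D)\,C^2\gamma_t^2 \;\leq\; (1-\mu\gamma_t)\,d^2(A_t,A^{\star}) + \zeta(\kappa,D)\,C^2\gamma_t^2,
\end{equation*}
since $\widetilde{F}(A_t)-\widetilde{F}(A^{\star})\geq 0$. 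When $\nabla_{A_t}\widetilde{F}$ is a stochastic estimate, the same recursion holds in conditional expectation provided the estimate is unbiased with a second moment controlled by $C^2$.

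The third step is a direct induction on this scalar recursion. With a Robbins--Monro-type schedule such as $\gamma_t = c/(t+1)$ (which satisfies the divergence and square-summability hypotheses assumed in the statement), a standard textbook argument yields $d^2(A_t,A^{\star})\leq K/t$ for a constant $K$ depending on $\mu$, $c$, $\zeta(\kappa,D)$, and $C$; this gives the claimed $O(1/t)$ rate. The divergence and square-summability hypotheses on $\gamma_t$ and the boundedness of $\nabla_{A_t}\widetilde{F}$ are used precisely here to close the induction.

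The main obstacle is the a priori guarantee that the iterates $\{A_t\}$ remain inside the prescribed geodesic ball throughout, so that both the trigonometric distortion lemma and strong g-convexity apply at every step. This is not automatic: one must either argue invariance of the ball under the update map (for instance via a small enough initial step size combined with the monotone decrease of $d^2(A_t,A^{\star})$ in expectation), or enforce it explicitly via a projection. In our concrete setting the relevant radii are explicit for the manifolds in play---$\pi/(2\sqrt{2})$ for $\textsf{Gr}(k,d)$ and $\textsf{St}(k,d)$, and $\pi/2$ for $\textsf{SO}(k)$, as noted after Proposition~\ref{prop5}---so the hypothesis is reasonable. A secondary, purely bookkeeping, subtlety is carrying the curvature factor $\zeta(\kappa,D)$ through the induction; since $\kappa\leq 0$, this factor is a finite explicit constant, so it only inflates the constants in the $O(1/t)$ bound without altering the rate.
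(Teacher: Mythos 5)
The paper never proves this proposition---it is imported wholesale from \cite{Nemirovski2009RobustSA,becigneul2018riemannian}, and the only thing the authors add is the remark afterwards verifying the hypotheses (completeness, curvature bound, the radii $\pi/2\sqrt{2}$ and $\pi/2$) for $\textsf{Gr}(k,d)$, $\textsf{St}(k,d)$ and $\textsf{SO}(k)$. So your reconstruction is being measured against the cited literature rather than against anything in the paper, and structurally it is the right reconstruction: the curvature-adjusted law of cosines with the distortion factor $\zeta(\kappa,D)$, the one-step distance recursion, and the Robbins--Monro induction with $\gamma_t = c/(t+1)$ are exactly the Zhang--Sra/B\'ecigneul--Ganea argument, and your closing caveat about keeping the iterates inside the ball is a real issue that the paper also waves at rather than resolves.

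The genuine gap is in your second step. You obtain the contraction $(1-\mu\gamma_t)$ from geodesic $\mu$-strong convexity of $\widetilde{F}$, and you justify that property by saying it ``holds on the ball because its radius is below the strong convexity radius of $\mathcal{M}$.'' That is a conflation: the strong convexity radius of a manifold is the radius within which geodesics between points are unique and the \emph{squared distance function} $d^2(\cdot,y)$ is geodesically (strongly) convex. It says nothing about an arbitrary objective $\widetilde{F}$; the CCA objective restricted to these manifolds is not geodesically convex in general, let alone strongly so. The $O(1/t)$ rate genuinely requires strong convexity (or a comparable growth condition) of the \emph{objective}---with only convexity and bounded gradients the rate degrades to $O(1/\sqrt{t})$---and in the cited references this is an explicit standing hypothesis, one the proposition's statement here leaves implicit. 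Your proof therefore closes only if you add ``$\widetilde{F}$ is geodesically $\mu$-strongly convex on the ball'' as an assumption; it cannot be derived from the manifold's strong convexity radius. Everything downstream of that substitution (the recursion, the induction, the handling of $\zeta(\kappa,D)$ as a finite constant since $\kappa\leq 0$) is correct.
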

\vspace{-5pt}

{All Riemannian manifolds we used, i.e., $\textsf{Gr}(k, d)$, $\textsf{St}(k, d)$ and $\textsf{SO}(k)$ are geodesically complete, and these manifolds have non-negative sectional curvatures, i.e., lower bounded by  $\kappa=0$. Moreover the minimum of convexity and injectivity radius for  $\textsf{Gr}(k, d)$, $\textsf{St}(k, d)$  and $\textsf{SO}(k)$ are $\pi/2\sqrt{2}$. Now, as long as the Riemannian updates lie inside the geodesic ball of radius less than $\pi/2\sqrt{2}$, the convergence rate for RGD applies in our setting.}

{\bf Running time.} To evaluate time complexity, we must look at the main compute-heavy steps
needed. The basic modules are $\textsf{Exp}$ and
$\textsf{Exp}^{-1}$ maps for $\textsf{St}(k, d)$, $\textsf{Gr}(k, d)$ and $\textsf{SO}(k)$ manifolds (see appendix).
Observe that the complexity of these modules
is influenced by the complexity of $\textsf{svd}$
needed for the $\textsf{Exp}$ map for the $\textsf{St}$ and $\textsf{Gr}$ manifolds. 
{Our algorithm involves structured matrices of size $d\times k$ and $k\times k$, so any matrix operation should not exceed a cost of $O(\max(d^2k, k^3))$, since in general $d \gg k$. Specifically, the most expensive calculation is SVD of matrices of size $d\times k$, which is $O(d^2k)$, see \cite{golub1971singular}. All other calculations are dominated by this term.}

\vspace{-10pt}
\section{Experiments}
\vspace{-10pt}
We first evaluate RSG+ for extracting top-$k$ canonical components on three benchmark datasets and show that it performs favorably compared with \cite{arora2017stochastic}. Then, we show that RSG+ also fits into feature learning in DeepCCA  \cite{andrew2013deep}, and can scale to large feature dimensions where the non-stochastic method fails. Finally we show that RSG+ can be used to improve fairness of deep neural networks without full access to labels of protected attributes during training.

\vspace{-6pt}
\subsection{CCA on Fixed Datasets}
\vspace{-5pt}
\textbf{Datasets and baseline.} We conduct experiments on three benchmark datasets (MNIST \cite{lecun2010mnist}, Mediamill \cite{snoek2006challenge} and CIFAR-10 \cite{Krizhevsky09learningmultiple}) to evaluate the performance of RSG+ to extract top-$k$ canonical components. To our best knowledge, \cite{arora2017stochastic} is the only previous work which stochastically optimizes the population objective in a streaming fashion and can extract top-$k$ components, so we compare our RSG+ with the matrix stochastic gradient (MSG) method proposed in \cite{arora2017stochastic} (There are two methods proposed in \cite{arora2017stochastic} and we choose MSG because it performs better in the experiments in \cite{arora2017stochastic}). The details regarding the three datasets and how we process them are as follows:

{\bf MNIST} \cite{lecun2010mnist}: MNIST contains grey-scale images of size $28\times 28$. We use its full training set containing $60$K images. Every image is split into left/right half, which are used as the two views.
{\bf Mediamill} \cite{snoek2006challenge}: Mediamill contains around $25.8$K paired features of videos and corresponding commentary of dimension $120,101$ respectively.
{\bf CIFAR-10} \cite{Krizhevsky09learningmultiple}: CIFAR-10 contains $60$K $32\times 32$ color images. Like MNIST, we split the images into left/right half and use them as two views. 

\textbf{Evaluation metric.} We choose to use Proportion of Correlations Captured (PCC) which is widely used  \cite{Ma2015FindingLS,ge2016efficient}, partly due to its efficiency, especially for relatively large datasets. Let $\hat{U}\in R^{d_x\times k}, \hat{V}\in R^{d_y\times k}$ denote the estimated subspaces returned by RSG+, and $U^*\in R^{d_x\times k}, V^*\in R^{d_y\times k}$ denote the true canonical subspaces (all for top-$k$). The PCC is defined as $
	\text{PCC} = \frac{\text{TCC}(X\hat{U}, Y\hat{V})}{\text{TCC}(XU^*, YV^*)}
$, 
where TCC is the sum of canonical correlations between two matrices.



\textbf{Performance.}  The performance in terms of PCC as a function of \# of seen samples (coming in a streaming way) are shown in Fig. \ref{fig_all}, and our RSG+ achieves around 10 times runtime improvement from MSG (see appendix for the table). 
Our RSG+ captures more correlation than MSG \cite{arora2017stochastic} while being $5-10$ times faster. One case where our RSG+ underperforms \cite{arora2017stochastic} is when the top-$k$ eigenvalues are dominated by the top-$l$ eigenvalues with $l<k$ (Fig. \ref{fig:mediamill_all}): on Mediamill dataset, the top-4 eigenvalues of the covariance matrix in view 1 are: $8.61,2.99,1.15,0.37$. The first eigenvalue is dominantly large compared with the rest and our RSG+ performs better for $k=1$ and worse than \cite{arora2017stochastic} for $k=2,4$. 
We also provide the runtime of RSG+ under different data dimension (set $d_x=d_y=d$) and number of total samples sampled from joint gaussian distribution in appendix. 

\begin{figure*}[!ht]
\centering
\begin{subfigure}[t]{.33\textwidth}
  \centering
  \includegraphics[width=.95\linewidth]{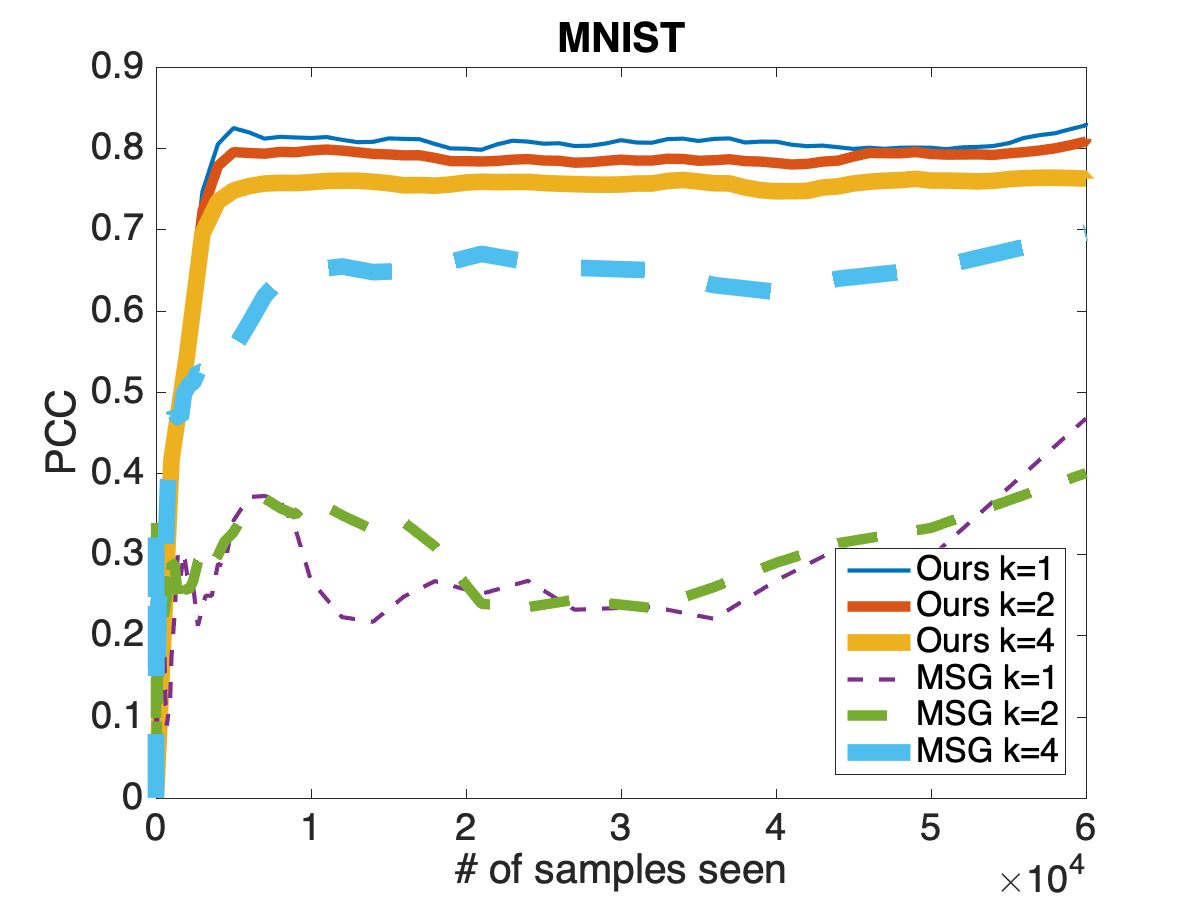}
  \caption{on MNIST}
  \label{fig:mnist_all}
\end{subfigure}%
\begin{subfigure}[t]{.33\textwidth}
  \centering
  \includegraphics[width=.95\linewidth]{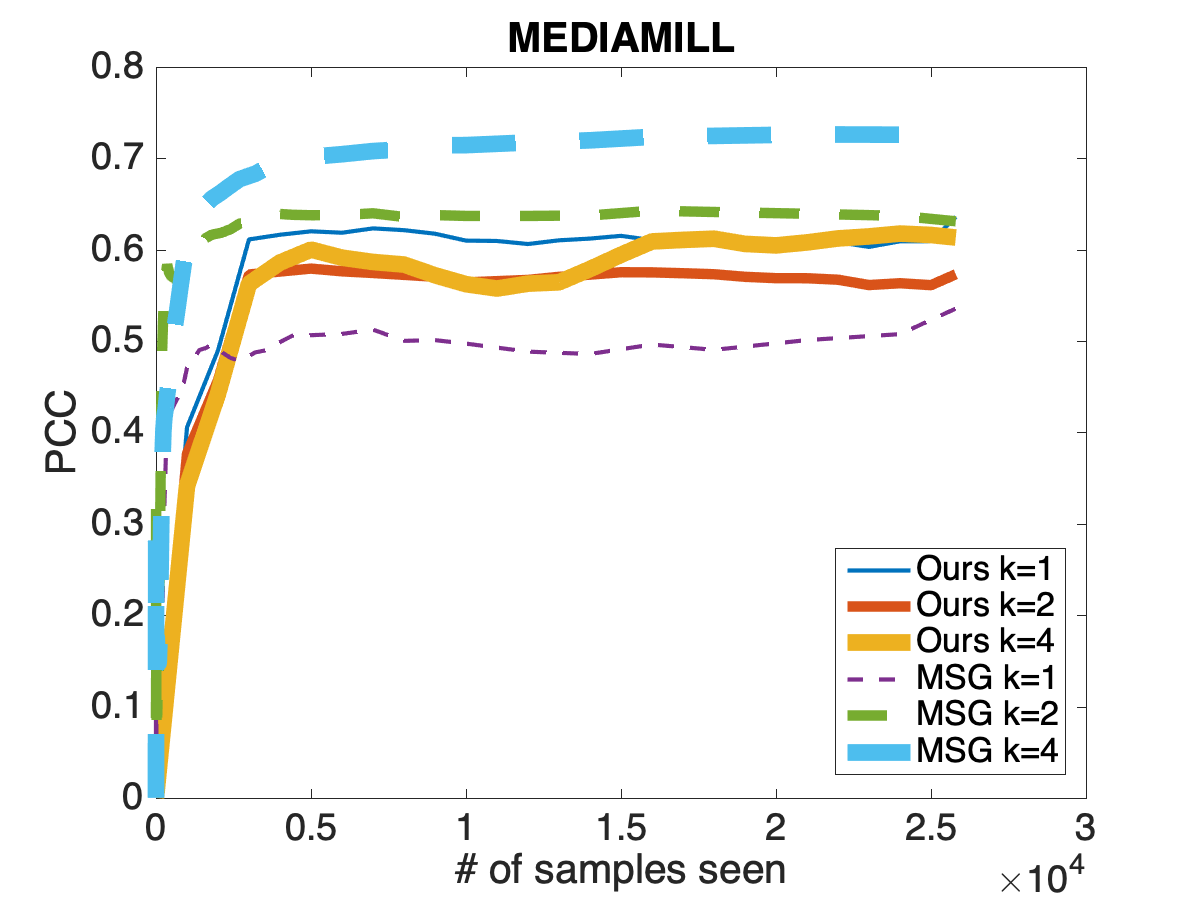}
  \caption{on Mediamill}
  \label{fig:mediamill_all}
\end{subfigure}
\begin{subfigure}[t]{.33\textwidth}
  \centering
  \includegraphics[width=.95\linewidth]{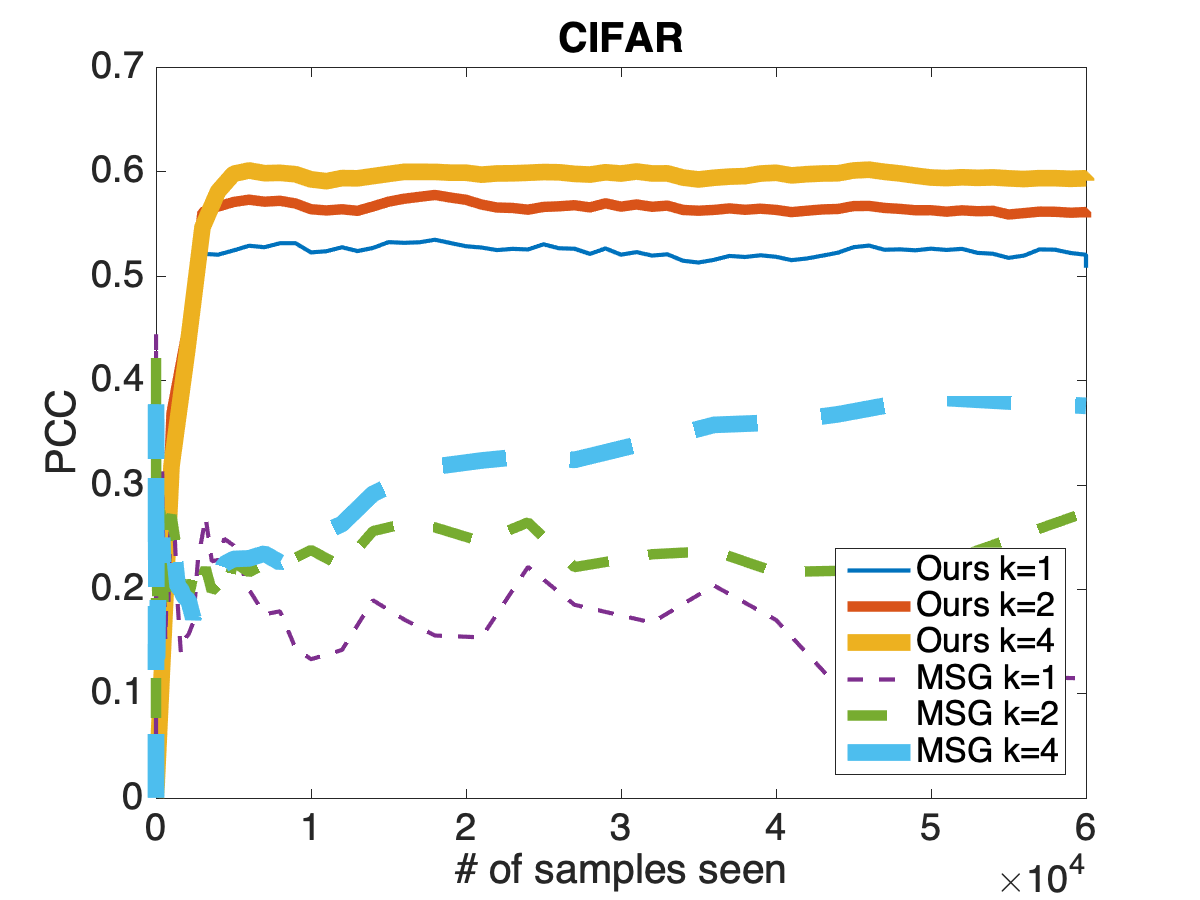}
  \caption{on CIFAR}
  \label{fig:cifar_all}
\end{subfigure}
\caption{Performance on three datasets in terms of PCC as a function of \# of seen samples.}
\label{fig_all}
\end{figure*}

\subsection{CCA for Deep Feature Learning}

\begin{wraptable}[8]{r}{0.50\columnwidth}
{

\caption{Results of feature learning on MNIST. N/A means fails to yield a result on our hardware.}
\resizebox{.5\columnwidth}{!}{
\centering

\begin{tabular}{l  c c c }
\hline
 Accuracy(\%) & $d=100$  &$d=500$  &$d=1000$  \\
\hline
DeepCCA    & $80.57$  &  $N/A$  &  $N/A$    \\
Ours   & $79.79$  &  $84.09$  &  $86.39$ \\ \hline
\label{DCCA}
\end{tabular}
}

}
\end{wraptable}

\textbf{Background and motivation.} A deep neural network (DNN) extension of CCA was proposed by \cite{andrew2013deep} and has become popular in multi-view representation learning tasks. The idea is to learn a deep neural network as the mapping from original data space to a latent space where the canonical correlations are maximized. We refer the reader
to \cite{andrew2013deep} for details of the task. Since deep neural networks are usually trained using SGD on mini-batches, this requires getting an estimate of the CCA objective at every iteration in a streaming fashion, thus our RSG+ can be a natural fit. We conduct experiments on a noisy version of MNIST dataset to evaluate RSG+.

\textbf{Dataset.} We follow \cite{wang2015deep} to construct a noisy version of MNIST: View $1$ is a randomly sampled image which is first rescaled to $[0,1]$ and then rotated by a random angle from $[-\frac{\pi}{4}, \frac{\pi}{4}]$. View $2$  is randomly sampled from the same class as view $1$. Then we add independent uniform noise from $[0,1]$ to each pixel. Finally the image is truncated into $[0,1]$ to form the view $2$.

\textbf{Implementation details.} We use a simple $2$-layer MLP with ReLU nonlinearity, where the hidden dimension in the middle is $512$ and the output feature dimension is $d\in\{100,500,1000\}$. After the network is trained on the CCA objective, we use a linear Support Vector Machine (SVM) to measure classification accuracy on output latent features. \cite{andrew2013deep} uses the closed form CCA objective on the current batch directly, which costs $O(d^3)$ memory and time for every iteration.

\textbf{Performance.} Table \ref{DCCA} shows that we get similar performance when $d=100$ and can scale to large latent dimensions $d=1000$ while the batch method \cite{andrew2013deep} encounters numerical difficulty on our GPU resources and the Pytorch \cite{paszke2019pytorch} platform in performing an eigen-decomposition of a $d\times d$ matrix when $d=500$, and becomes difficult if $d$ is larger than $1000$.

\subsection{CCA for Fairness Applications}

\begin{wrapfigure}{r}{0.7\columnwidth}

    \centering
    \includegraphics[width=0.7\columnwidth]{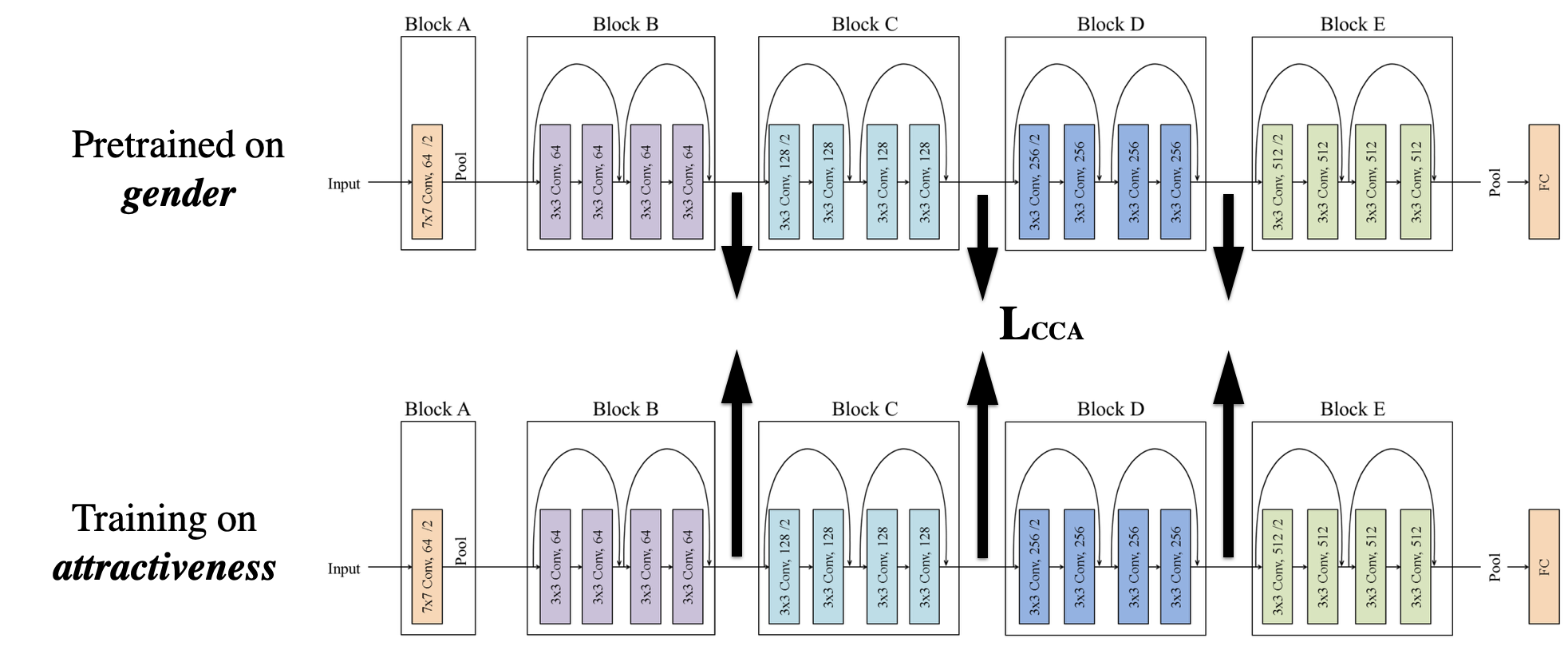}
    \caption{Training architecture for fairness experiment. The model above is the pretrained model and the model below is being trained. Use of CCA allows the two network architectures to be different.}
    \label{fig:cca_explain}
    
\end{wrapfigure}

\textbf{Background and motivation.} Fairness is becoming an important issue to consider in the design of learning algorithms. A common 
strategy to make an algorithm fair is to remove the influence of one/more protected attributes  when training the 
models, see \cite{lokhande2020fairalm}. Most methods assume that the labels of protected attributes are known during training but this may not always be possible. CCA  enables considering a slightly different setting, where we may not have per-sample protected attributes which may be sensitive or hard to obtain for third-parties \cite{price2019privacy}. On the other hand, we assume that a model pre-trained to predict the protected attribute labels is provided. For example, if the protected attribute is gender, we only assume that a good classifier which is trained to predict gender from the samples is available rather than sample-wise gender values themselves. We next demonstrate that fairness of the model, using standard measures, can be improved via constraints on correlation values from CCA.

\textbf{Dataset.} CelebA \cite{wang2015stochastic} consists of $200$K celebrity face images from the internet. There are up to $40$ labels, each of which is binary-valued. Here, we follow \cite{lokhande2020fairalm} to focus on the {\it attactiveness} attribute (which we want to train a classifier to predict) and the {\it gender} is treated as  ``protected'' since it may lead to an unfair classifier according to  \cite{lokhande2020fairalm}.

\textbf{Method.} Our strategy is inspired by \cite{morcos2018insights} which showed that canonical correlations can reveal the similarity in neural networks: when two networks (same architecture) are trained using different labels/schemes for example, canonical correlations can indicate how similar their features are. Our observation is the following. Consider a classifier that is trained on gender (the protected attribute), and another classifier that is trained on {\it attractiveness}, if the features extracted by the latter model share a high similarity with the one trained to predict gender, then it is more likely that the latter model is influenced by features in the image pertinent to gender, which will lead to an unfairly biased trained model. We show that by imposing a loss on the canonical correlation between the network being trained (but we lack per-sample protected attribute information) and a well trained classifier pre-trained on the protected attributes, we can obtain a more fair model. This may enable training fairer models in settings which would otherwise be difficult.
The training architecture is shown in Fig. \ref{fig:cca_explain}.

\begin{wraptable}[15]{r}{0.6\columnwidth}
{

\caption{Fairness results on CelebA. We applied CCA on three different layers in Resnet-18 respectively. See  appendix for  positions of conv $0,1,2$. ``Ours-conv[0,1]-conv[1,2]'' means stacking features from different layers to form hypercolumn  features \cite{hariharan2015hypercolumns}, which shows that our approach allows two networks to have different shape/size.}
\resizebox{.6\columnwidth}{!}{
\centering
\begin{tabular}{l  c c c }
\hline
  & Accuracy(\%)  & DEO(\%) &  DDP(\%)\\
\hline
Unconstrained   & $76.3$  &  $22.3$  &  $4.8$    \\
Ours-conv$0$   & $76.5$  &  $17.4$  &  $\textbf{1.4}$ \\ 
Ours-conv$1$   & $\textbf{77.7}$  &  $\textbf{15.3}$  &  $3.2$ \\
Ours-conv$2$   & $75.9$  &  $22.0$  &  $2.8$ \\
Ours-conv[0,1]-conv[1,2] & 76.0 & 22.1 & 3.9\\
\hline
\label{table_fair}
\end{tabular}
}
}
\end{wraptable}

\textbf{Implementation details.} To simulate the case where we only have a pretrained network on protected attributes, we train a Resnet-$18$ \cite{he2016deep} on {\it gender} attribute, and when we train the classifier to predict {\it attractiveness}, we add a loss using the canonical correlations between these two networks on intermediate layers: $L_{\text{total}} = L_{\text{cross-entropy}} + L_{\text{CCA}}$ where the first term is the standard cross entropy term and the second term is the canonical correlation. 
See appendix for more details of training/evaluation.

\textbf{Results.} We choose two commonly used error metrics for fairness: difference in Equality of Opportunity \cite{hardt2016equality} (DEO), and difference in Demographic Parity \cite{yao2017beyond} (DDP). We conduct experiments by applying the canonical correlation loss on three different layers in Resnet-18. In Table \ref{table_fair}, we can see that applying canonical correlation loss generally improves the DEO and DDP metrics (lower is better) over the standard model (trained using cross entropy loss only). Specifically, applying the loss on early layers like conv$0$ and conv$1$ gets better performance than applying at a relatively late layer like conv$2$. Another promising aspect of our approach is that is can easily handle the case where the protected attribute is a continuous variable (as long as a well trained regression network on the protected attribute is given) while other methods like \cite{lokhande2020fairalm,zhang2018mitigating} need to first discretize the variable and then enforce constraints which can be much more involved.

\textbf{Limitations.}
 Our current implementation has difficulty to scale beyond $d=10^5$ data dimension and this may be desirable for large scale DNNs. Exploring the sparsity may be one way to solve the problem and will be enabled by
 additional developments in modern toolboxes.

\section{Related Work}
{\bf Stochastic CCA:}
There has been much interest in designing scalable and provable algorithms for CCA: \cite{Ma2015FindingLS} proposed the first stochastic algorithm for CCA, while only local convergence is proven for non-stochastic version. \cite{wang2016efficient} designed algorithm which uses alternating SVRG combined with shift-and-invert pre-conditioning, with global convergence.  These stochastic methods, together with \cite{ge2016efficient} \cite{AllenZhu2016DoublyAM}, which reduce CCA problem to generalized eigenvalue problem and solve it by performing efficient power method, all belongs to the methods that try to solve empirical CCA problem, it can be seen as ERM approxiamtion of the original population objective, which requires solving numerical optimization of the empirical CCA objective on a fixed data set. These methods usually assume the access to the full dataset in the beginning, which is not very suitable for many practical applications where data tend to come in a streaming way. Recently, there are increasingly interest in considering population CCA problem \cite{arora2017stochastic} \cite{gao2019stochastic}. The main difficulty in population setting is we have limited knowledge about the objective unless we know the distribution of $\bold{X}$ and $\bold{Y}$. \cite{arora2017stochastic} handles this problem by deriving an estimation of gradient of population objecitve whose error can be properly bounded so that applying proximal gradient to a convex relexed objective will provably converge. \cite{gao2019stochastic} provides tightened analysis of the time complexity of the algorithm in \cite{wang2016efficient}, and provides sample complexity under certain distribution. The problem we are trying to solve in this work is the same as that in \cite{arora2017stochastic,gao2019stochastic}: to optimize the population objective of CCA in a streaming fashion.

{\bf Riemannian Optimization:}
Riemannian optimization is a generalization of standard Euclidean optimization methods to smooth manifolds, which takes the following form:
Given $f: \mathcal{M} \rightarrow \mathbf{R},$ solve $\min _{x \in \mathcal{M}} f(x)$, 
where $\mathcal{M}$ is a Riemannian manifold. 
  One advantage is that it provides a nice way to express many constrained optimization problems as unconstrained problems. Applications include matrix and tensor factorization \cite{Ishteva2011BestLM}, \cite{Tan2014RiemannianPF}, PCA \cite{Edelman1998TheGO}, CCA \cite{Yger2012AdaptiveCC}, and so on. \cite{Yger2012AdaptiveCC} rewrites CCA formulation as Riemannian optimization on Stiefel manifold. In our work, we further explore the ability of Riemannian optimization framework, decomposing the linear space spanned by canonical vectors into products of several matrices which lie in several different Riemannian manifolds. 

\section{Conclusions}

In this work, we presented
a stochastic approach (RSG+) for the CCA model
based on the observation that the solution of CCA can be decomposed into a product of matrices
which lie on certain structured spaces.
This affords specialized
numerical schemes and makes the optimization more efficient.
The optimization is based on Riemannian stochastic gradient descent and we provide a proof
for its $O(\frac{1}{t})$ convergence rate with $O(d^2k)$ time complexity per iteration.
In experimental evaluations, we find that our RSG+ behaves favorably relative to the baseline stochastic CCA method in
capturing the correlation in the datasets. We
also show the use of RSG+ in the DeepCCA setting showing feasibility when scaling to large dimensions as
well as in an interesting use case in training fair models. 


\nocite{langley00}

\bibliography{egbib}
\bibliographystyle{abbrvnat}

\section{Appendix}
\subsection{A brief review of relevant differential geometry concepts}
To make the paper self-contained, we briefly review certain differential geometry
concepts.
We only include a condensed description -- needed
for our algorithm and analysis -- and refer the interested reader to
\cite{boothby1986introduction} for a comprehensive and rigorous treatment of the topic. 

  \begin{figure}[!h]
   \setlength{\abovecaptionskip}{-0cm}
    \setlength{\belowcaptionskip}{-0cm} 
        \centering
               \includegraphics[width=0.7\columnwidth]{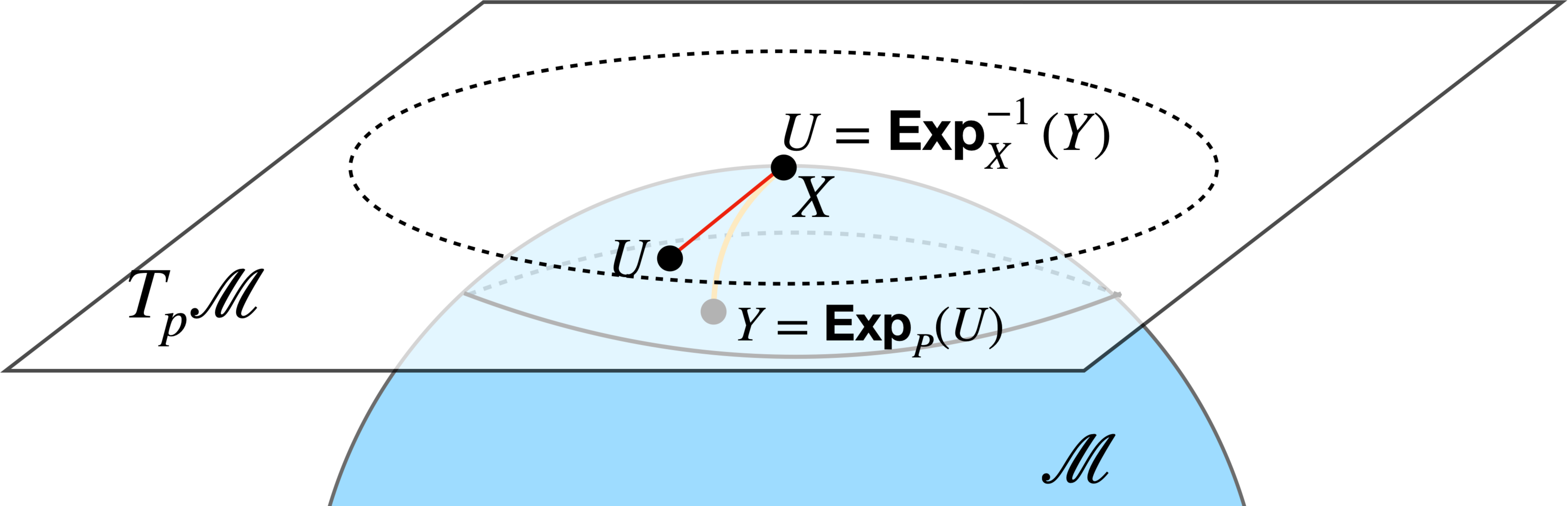}
               \caption{\label{tangent}\footnotesize Schematic description of an exemplar manifold ($\mathcal{M}$) and the visual illustration of $\textsf{Exp}$ and $\textsf{Exp}^{-1}$ map. }
\end{figure}

\textbf{Riemannian Manifold:} A Riemannian manifold, $\mathcal{M}$, (of dimension $m$) is defined as a (smooth) topological space which is locally diffeomorphic to the Euclidean space $\mathbf{R}^m$.
Additionally, $\mathcal{M}$ is equipped with a Riemannian metric $g$ which can be defined
as $$g_X: T_X\mathcal{M} \times T_X\mathcal{M}\rightarrow \mathbf{R},$$ where $T_X\mathcal{M}$
is the tangent space at $X$ of $\mathcal{M}$, see Fig. \ref{tangent}.

If $X\in \mathcal{M}$, the Riemannian Exponential map at $X$,
denoted by $\textsf{Exp}_X: T_X \mathcal{M} \rightarrow \mathcal{M}$ is defined as $\gamma(1)$
where $\gamma:[0,1]\rightarrow \mathcal{M}$. We can find $\gamma$ by solving the following differential equation: $$\gamma(0) = X, (\forall t_0 \in [0,1])\frac{d\gamma}{dt}\Big|_{t=t_0} = U.$$
In general $\textsf{Exp}_X$ is not invertible but the inverse $$\textsf{Exp}^{-1}_X:\mathcal{U}\subset \mathcal{M} \rightarrow T_X\mathcal{M}$$ is defined only if $\mathcal{U} = \mathcal{B}_r(X)$, where $r$ is called
the {\it injectivity radius} \cite{boothby1986introduction} of $\mathcal{M}$. This
concept will be useful to define the mechanics of gradient descent on the manifold.

In our reformulation, we made use of
the following manifolds, specifically, when decomposing $U$ and $V$ into a product of several matrices.
\begin{asparaenum}[(a)]
\item $\textsf{St}(p,n)$: the {\bf Stiefel} manifold consists of $n\times p$
  column orthonormal matrices
\item $\textsf{Gr}(p,n)$: the {\bf Grassman} manifold consists of
  $p$-dimensional subspaces in $\mathbf{R}^n$
\item $\textsf{SO}(n)$, the manifold/group consists of $n\times n$ {\bf special orthogonal matrices}, i.e., space of orthogonal matrices with determinant $1$.
\end{asparaenum}

{ 
{\bf Differential Geometry of $\textsf{SO}(n)$:} $\textsf{SO}(n)$ is a compact
Riemannian manifold, hence by the Hopf-Rinow theorem, it is also a
geodesically complete manifold \cite{helgason}. Its
geometry is well understood -- we recall a few relevant concepts here
and note that \cite{helgason} includes a more comprehensive treatment.

$\textsf{SO}(n)$ has a Lie group structure and the corresponding Lie
algebra, $\mathfrak{so}(n)$, is defined as, $$ \mathfrak{so}(n) = \{W
\in \mathbf{R}^{n\times n} | W^T = -W\}.$$  In other words,
$\mathfrak{so}(n)$ (the set of Left invariant vector fields with
associated Lie bracket) is the set of $n \times n$ anti-symmetric
matrices. The Lie bracket, $[,]$, operator on $\mathfrak{so}(n)$ is
defined as the commutator, i.e., 
$$\text{for } U, V \in \mathfrak{so}(n), \quad [U,
  V] = UV-VU.$$ Now, we can define a Riemannian metric on $\textsf{SO}(n)$ as
follows: $$ {\langle U, V \rangle}_{X} = \text{trace}\Big(U^TV\Big), \quad \text{where}$$ 
$$U, V \in T_X (\textsf{SO}(n)), X \in \textsf{SO}(n).$$ 

It can be shown that
this is a bi-invariant Riemannian metric. Under this bi-invariant
metric, now we define the Riemannian exponential and inverse
exponential map as follows. Let, $X, Y \in \textsf{SO}(n)$, $U \in T_{X}
(\textsf{SO}(n))$. Then, 
\begin{align*}
Exp_{X}^{-1}(Y) &= X \log(X^T Y) \end{align*}
\begin{align*}
Exp_{X} (U) &= X \exp(X^T U), 
\end{align*}
where, $\exp$, $\log$ are the matrix exponential and
logarithm respectively.

{\bf Differential Geometry of the Stiefel manifold:} The set of all
full column rank $(n \times p)$ dimensional real matrices form a
Stiefel manifold, $\textsf{St}(p, n)$, where $n \geq p$. 

A compact
Stiefel manifold is the set of all column orthonormal real
matrices. When $p<n$, $\textsf{St}(p, n)$ can be identified with
$$
\textsf{SO}(n)/SO(n-p).$$ Note that, when we consider the quotient space,
$\textsf{SO}(n)/SO(n-p)$, we assume that $\textsf{SO}(n-p) \simeq \iota(\textsf{SO}(n-p))$ is a
subgroup of $\textsf{SO}(n)$, where, 
$$\iota: \textsf{SO}(n-p) \rightarrow \textsf{SO}(n)$$ defined by
$$X\mapsto \begin{bmatrix} I_p & 0 \\ 0 & X\end{bmatrix}$$ is an
  isomorphism from $\textsf{SO}(n-p)$ to $\iota(\textsf{SO}(n-p))$.

{\bf Differential Geometry of the Grassmannian $\textsf{Gr}(p,n)$:} The
Grassmann manifold (or the Grassmannian) is defined as the set of all
$p$-dimensional linear subspaces in $\mathbf{R}^n$ and is denoted by
$\textsf{Gr}(p, n)$, where $p \in \mathbf{Z}^{+}$, $n \in
\mathbf{Z}^{+}$, $n \geq p$. Grassmannian is a symmetric space and can
be identified with the quotient space $$\textsf{SO}(n)/S\left(O(p)\times
O(n-p)\right),$$ where $S\left(O(p)\times O(n-p)\right)$ is the set of
all $n\times n$ matrices whose top left $p\times p$ and bottom right
$n-p \times n-p$ submatrices are orthogonal and all other entries are
$0$, and overall the determinant is $1$. 

A point $\mathcal{X} \in
\textsf{Gr}(p, n)$ can be specified by a basis, $X$.  We say that
$\mathcal{X} = \text{Col}(X)$ if $X$ is a basis of $\mathcal{X}$,
where $\text{Col}(.)$ is the column span operator. It is easy to see
that the general linear group $\text{GL}(p)$ acts isometrically,
freely and properly on $\textsf{St}(p, n)$. Moreover, $\textsf{Gr}(p, n)$
can be identified with the quotient space $\textsf{St}(p, n)/
\text{GL}(p)$. Hence, the projection map $$\Pi: \textsf{St}(p, n)
\rightarrow \textsf{Gr}(p, n)$$ is a {\it Riemannian submersion}, where
$\Pi(X) \triangleq \text{Col}(X)$. Moreover, the triplet
$(\textsf{St}(p,n), \Pi, \textsf{Gr}(p,n))$ is a fiber bundle.

{\it Horizontal and Vertical Space:} At every point $X \in \textsf{St}(p, n)$, we can define the {\it
  vertical space}, $\mathcal{V}_X \subset T_X \textsf{St}(p, n)$ to be
$\textrm{Ker}(\Pi_{*X})$. Further, given $g^{\textsf{St}}$, we define
the {\it horizontal space}, $\mathcal{H}_X$ to be the
$g^{\textsf{St}}$-orthogonal complement of $\mathcal{V}_X$.

{\it Horizontal lift:} Using 
the theory of principal bundles, for every vector field
$\widetilde{U}$ on $\textsf{Gr}(p, n)$, we define the {\it horizontal
  lift} of $\widetilde{U}$ to be the unique vector field $U$ on
$\textsf{St}(p, n)$ for which $U_X \in \mathcal{H}_X$ and $\Pi_{*X} U_X
= \widetilde{U}_{\Pi(X)}$, $\text{for all } X \in \textsf{St}(p, n)$. 

{\it Metric on \textsf{Gr}:} As, $\Pi$
is a Riemannian submersion, the isomorphism
$\Pi_{*X}|_{\mathcal{H}_X}: \mathcal{H}_X \rightarrow
T_{\Pi(X)}\textsf{Gr}(p, n)$ is an isometry from $(\mathcal{H}_X,
g^{\textsf{St}}_X)$ to $(T_{\Pi(X)}\textsf{Gr}(p, n),
g^{\textsf{Gr}}_{\Pi(X)})$. So, $g^{\textsf{Gr}}_{\Pi(X)}$ is defined as:
\begin{align}
\label{gr:metric}
g^{\textsf{Gr}}_{\Pi(X)}(\widetilde{U}_{\Pi(X)}, \widetilde{V}_{\Pi(X)}) &= g^{\textsf{St}}_{X} (U_X, V_X) \\
&= \textrm{trace}((X^TX)^{-1}U_X^TV_X) \nonumber
\end{align}
where, $\widetilde{U}, \widetilde{V} \in T_{\Pi(X)}\textsf{Gr}(p, n)$
and $\Pi_{*X} U_X = \widetilde{U}_{\Pi(X)}$, $\Pi_{*X} V_X =
\widetilde{V}_{\Pi(X)}$, $U_X \in \mathcal{H}_X$ and $V_X \in
\mathcal{H}_X$. 
}

We covered the exponential map and the Riemannian metric above, and their
explicit formulation for manifolds listed above is provided for easy reference
in Table \ref{manifoldExample}.

\begin{table*}[h!]
  \begin{center}
    \tabcolsep3pt 
    \begin{tabular}{l|c|c|c}
      \topline\myrowcolour
       & $g_X\left(U, V\right)$ & $\textsf{Exp}_X\left(U\right)$ & $\textsf{Exp}^{-1}_X\left(Y\right)$\\
      \midtopline
      $\textsf{St}(p,n)$ \cite{kaneko2012empirical}& $\text{trace}\left(U^TV\right)$ &  $\widetilde{U}\widetilde{V}^T$, & $(Y-X) - X(Y - X)^TX$\\
      & & $\widetilde{U}S\widetilde{V}^T = \text{svd}(X+U)$ & \\
      \midtopline
      $\textsf{Gr}(p,n)$ \cite{absil2004riemannian} & $\text{trace}\left(\Pi_*^{-1}\left(U\right)^T\Pi_*^{-1}\left(V\right)\right)$& $\widehat{U}\widehat{V}^T$, & $\bar{Y}\left(\bar{X}^T\bar{Y}\right)^{-1} - \bar{X}$,\\
      & & $\widehat{U}\widehat{S}\widehat{V}^T = \text{svd}(\bar{X}+U)$ & $X = \Pi(\bar{X}), Y = \Pi(\bar{Y})$\\
       \midtopline
      $\textsf{SO}(n)$ \cite{subbarao2009nonlinear}& $\text{trace}\left(X^TUX^TV\right)$ & $X\textsf{expm}\left(X^TU\right)$&  $X\textsf{logm}\left(X^TY\right)$\\
      \bottomline
    \end{tabular}
  \end{center}
  \caption{\footnotesize Explicit forms for some operations we need.
    $\Pi(X)$ returns $X$'s column space; $\Pi_*$ is $\Pi$'s differential.}
\label{manifoldExample}
\end{table*}

\subsection{Proof of Theorem 1}
We first restate the assumptions from section 2:	

\textbf{Assumptions:}
\begin{asparaenum}[\bfseries(a)]
\item The random variables $\bold{X} \sim \mathcal{N}(\mathbf{0}, \Sigma_x)$ and $\bold{Y} \sim \mathcal{N}(\mathbf{0}, \Sigma_y)$ with $\Sigma_x \preceq cI_d$ and $\Sigma_y \preceq cI_d$ for some $c>0$.
\item The samples $X$ and $Y$ drawn from $\mathcal{X}$ and $\mathcal{Y}$ respectively have zero mean.
\item For a given $k\leq d$, $\Sigma_x$ and $\Sigma_y$ have non-zero top-$k$ eigen values.
\end{asparaenum}

Recall that $F$ and $\widetilde{F}$ are the optimal values of the true and approximated CCA objective in (1) and (4) respectively,  we next restate Theorem 1 and give its proof:

\begin{theorem}
Under the assumptions and notations above, the approximation error $E = \|F-\widetilde{F}\|$ is bounded and goes to zero while the {\it whitening constraints} in (4b) are satisfied.
\end{theorem}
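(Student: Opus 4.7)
The plan is to follow the sketch already outlined by the authors and flesh it out so that the bound is expressed entirely through the PCA reconstruction error, which Proposition~\ref{prop2} controls. First I would fix the notation: let $U^{*}, V^{*}$ be a CCA optimum for \eqref{cca_opt1}, let $\widetilde{U}, \widetilde{V}$ be the PCA bases (top-$k$ principal directions of $C_X$ and $C_Y$), and let $\widehat{X} = X\widetilde{U}\widetilde{U}^{T}$, $\widehat{Y} = Y\widetilde{V}\widetilde{V}^{T}$ be the corresponding rank-$k$ reconstructions. Within the feasible set of \eqref{cca_opt2} I would pick the specific choice $S_u Q_u = \widetilde{U}^{T} U^{*}$ and $S_v Q_v = \widetilde{V}^{T} V^{*}$ (decomposed by RQ), so that $U = \widetilde{U}\widetilde{U}^{T} U^{*}$ and $V = \widetilde{V}\widetilde{V}^{T} V^{*}$. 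Since $\widetilde{F}$ in \eqref{cca_opt2} is the optimum, any feasible choice only lower-bounds it, which is what we need to control $|F - \widetilde{F}|$ from above.

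Next I would rewrite both objectives as cross-correlations in data-space. Under the above substitution one checks
\[
N\,\widetilde{F} \;=\; \mathrm{trace}\bigl((\widehat{X}U^{*})^{T}\widehat{Y}V^{*}\bigr), \qquad
N\,F \;=\; \mathrm{trace}\bigl((XU^{*})^{T}(YV^{*})\bigr).
\]
Subtracting, and using the identity $AB - A'B' = (A-A')B + A'(B-B')$ with $A = (XU^{*})^{T}$, $A' = (\widehat{X}U^{*})^{T}$, $B = YV^{*}$, $B' = \widehat{Y}V^{*}$, I would bound
\[
N\,|F - \widetilde{F}| \;\le\; \bigl\|(X-\widehat{X})U^{*}\bigr\|_{F}\,\|YV^{*}\|_{F} + \bigl\|\widehat{X}U^{*}\bigr\|_{F}\,\bigl\|(Y-\widehat{Y})V^{*}\bigr\|_{F}
\]
by Cauchy--Schwarz applied to the trace. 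The factors $\|YV^{*}\|_{F}$ and $\|\widehat{X}U^{*}\|_{F}$ are bounded in expectation by the whitening constraints $V^{*T}C_Y V^{*} = I_k$ and $U^{*T}C_X U^{*} = I_k$ together with assumption~(a) ($\Sigma_x,\Sigma_y \preceq c I_d$), so they are $O(\sqrt{Nk})$. The remaining two factors are exactly the PCA reconstruction errors $\mathcal{E}_k^{(X)}$ and $\mathcal{E}_k^{(Y)}$ times operator norms of $U^{*},V^{*}$, which are also controlled through $C_X^{-1/2},C_Y^{-1/2}$ on the top-$k$ support.

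Now I invoke Proposition~\ref{prop2} on each of $\mathcal{X}$ and $\mathcal{Y}$: because $\bold{X}$ and $\bold{Y}$ are Gaussian, hence sub-Gaussian, and since $\widetilde{U},\widetilde{V}$ are the empirical top-$k$ PCA bases, the spectral perturbation $\|\Delta\|_{2} = \|C_X - \widetilde{C}_X\|_{2}$ vanishes as the sample estimate concentrates. Combined with assumption~(c) which ensures $\lambda_k > 0$ and (for the sharper bound) $\lambda_k - \lambda_{k+1} \ge 0$, Proposition~\ref{prop2} yields $\mathcal{E}_k^{(X)}, \mathcal{E}_k^{(Y)} \to 0$ and hence $|F - \widetilde{F}| \to 0$. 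Finally, feasibility of the whitening constraints $U^{T}C_X U = I_k$, $V^{T}C_Y V = I_k$ follows by direct substitution of the chosen $U = \widetilde{U} S_u Q_u$ and $V = \widetilde{V} S_v Q_v$ together with the constraint $U^{*T}C_X U^{*} = I_k$ of the original CCA problem, since $\widetilde{U}^{T}\widetilde{U} = I_k$ and similarly for $\widetilde{V}$.

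The main obstacle I anticipate is the step that separates $|F - \widetilde{F}|$ into PCA reconstruction errors multiplied by norms that remain $O(1)$ after normalization by $N$. Specifically, the factor $\|U^{*}\|_{2}$ involves $C_X^{-1/2}\Phi_k$, which is potentially ill-conditioned when $\lambda_k$ is small; assumption~(c) avoids degeneracy but the cleanest way to get a dimension-free bound is to pull $C_X^{1/2}$ through and use the whitening constraints on $U^{*}$. Making this algebra tight, and keeping the bound genuinely $o(1)$ rather than merely finite, is where the careful work lies; the Gaussian (sub-Gaussian) assumption is what lets Proposition~\ref{prop2} drive the bound to zero.
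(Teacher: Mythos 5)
Your setup and the first half of your argument track the paper's proof closely: the same choice $S_uQ_u = \widetilde{U}^TU^*$, $S_vQ_v = \widetilde{V}^TV^*$, the same rewriting of $N\widetilde{F}$ and $NF$ as traces over the reconstructed data, and a Cauchy--Schwarz bound on the difference (the paper uses a three-term splitting via von Neumann's trace inequality; you use the equivalent two-term identity $AB - A'B' = (A-A')B + A'(B-B')$; both reduce $|F-\widetilde{F}|$ to products of PCA reconstruction errors with whitened norms).

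The gap is in the final limiting step. You claim that $\|\Delta\|_2 = \|C_X - \widetilde{C}_X\|_2$ ``vanishes as the sample estimate concentrates.'' But in Proposition~\ref{prop2}, $\widetilde{C}_X$ is not a sample estimate of $C_X$: it is the covariance of the rank-$k$ reconstruction $\widehat{X}$. The difference $C_X - \widetilde{C}_X$ is a truncation gap --- roughly the spectral mass sitting in $\lambda_{k+1},\dots,\lambda_d$ --- and it does not go to zero as $N \to \infty$; it is bounded away from zero whenever the data is not exactly rank $k$. So concentration cannot drive $E \to 0$, and your proof as written only establishes boundedness. The paper's mechanism is different: it applies Proposition~\ref{prop2} to the \emph{projected} variables $X_u = XU^*$ and $\widehat{X}_u = \widehat{X}U^*$, so that $\Delta_x = C(X_u) - C(\widehat{X}_u)$; the original CCA constraint gives $C(X_u) = I_k$, and the final bound becomes a function of $\|I_k - C(\widehat{X}_u)\|_2$ and $\|I_k - C(\widehat{Y}_v)\|_2$, which vanish exactly when the whitening constraints in (4b) are satisfied by the reparameterized solution. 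That is why the theorem reads ``goes to zero \emph{while} the whitening constraints are satisfied'' rather than ``as $N \to \infty$.'' To close your argument you need to move the projection by $U^*, V^*$ inside the application of Proposition~\ref{prop2} and tie the limit to constraint satisfaction, not to sampling.
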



\begin{proof}
Let $U^*, V^*$ be the true solution of CCA. Let $U = \widetilde{U}S_uQ_u, V = \widetilde{V}S_vQ_v$ be the solution of (4) with $\widetilde{U}, \widetilde{V}$ be the PCA solutions of $X$ and $Y$ respectively with $S_uQ_u = \widetilde{U}^TU^*$ and $S_vQ_v = \widetilde{V}^TV^*$ (using RQ decomposition). Let $\widehat{X} = X\widetilde{U}\widetilde{U}^T$ and $\widehat{Y} = Y\widetilde{V}\widetilde{V}^T$ be the reconstruction of $X$ and $Y$ using principal vectors. 

Then, we can write 
\begin{align*}
\widetilde{F} &= \text{trace}\left(U^TC_{XY}V\right) 
= \text{trace}\left(\frac{1}{N} \left(\widehat{X}U^*\right)^T \widehat{Y}V^*\right)
\end{align*}
Similarly we can write $F = \text{trace}\left(\frac{1}{N}\left(XU^*\right)^TYV^*\right)$.

Using Def. 1, we know $\widehat{X}$, $\widehat{Y}$ follow sub-Gaussian distributions (such an assumption is common for such analyses for CCA as well as many other generic models).

Consider the approximation error between the objective functions as $E = |F - \widetilde{F}|$.  Due to von Neumann's trace inequality and Cauchy–Schwarz inequality, we have
\begin{align*}
\label{approx_eq}
E &= \frac{1}{N}|\text{trace}\left((U^*)^T\widehat{X}^T\widehat{Y}(V^*) - (U^*)^TX^TY(V^*)\right)|\\
& \leq |\text{trace}\left((U^*)^T\left( \left( \widehat{X} - X \right)^T\left(\widehat{Y} - Y\right) -2X^TY + X^T\widehat{Y} + \widehat{X}^TY\right)(V^*)\right)| \\& \leq 
 \sum\limits_{i} \sigma_i(\widehat{X}_u - X_u )\sigma_i(\widehat{Y}_v - Y_v) + \sum_i \sigma_i(\widehat{X}_u - X_u )\sigma_i(Y_v) + \sum_i \sigma_i(\widehat{Y}_v - Y_v )\sigma_i(X_u)
 \\& \leq \|\left( \widehat{X}_u - X_u \right)\|_F\|\left( \widehat{Y}_v - Y_v \right)\|_F + \|\left( \widehat{X}_u - X_u \right)\|_F\|Y_v\|_F + \left( \widehat{Y}_v - Y_v \right)\|_F\|X_u\|_F\quad \quad  \tag{A.1}
\end{align*}
Here $A_u = AU^*$ and $A_v = AV^*$ for any suitable $A$.
where $\sigma_i$(A) denote the $i$-th singular value of matrix A and $\|\bullet\|_F$ denotes the Frobenius norm. 

Now, using Proposition 1, we get  
\begin{align*}
\label{approx_ineq}
\|\left( \widehat{X}_u - X_u \right)\|_F &\leq \min\left(
\sqrt{2k}\|\Delta_x\|_2, \frac{2\|\Delta_x\|_2^2}{\lambda^x_k - \lambda^x_{k+1}}\right) \\ 
\|\left( \widehat{Y}_v - Y_v \right)\|_F &\leq \min\left(
\sqrt{2k}\|\Delta_y\|_2, \frac{2\|\Delta_y\|_2^2}{\lambda^y_k - \lambda^y_{k+1}}\right) \tag{A.2}
\end{align*}
where,
\begin{align}
\Delta_x = C(X_u) - C(\widehat{X}_u) \quad \Delta_y = C(Y_v) - C(\widehat{Y}_v).
\end{align}
Here $\lambda^x$s and $\lambda^y$s are the eigen values of $C(X_u)$ and $C(Y_v)$ respectively. Now, assume that $C(X_u) = I_k$ and $C(Y_v) = I_k$ since $X_u$ and $Y_v$ are solutions of Eq. 1. Furthermore assume $\lambda^x_k - \lambda^x_{k+1} \geq \Lambda$ and $\lambda^y_k - \lambda^y_{k+1} \geq \Lambda$ for some $\Lambda >0$. Then, we can rewrite \eqref{approx_eq} as

\begin{align*}
E &\leq \min\left(
\sqrt{2k}\|I_k - C(\widehat{X}_u)\|_2, \frac{2\|I_k - C(\widehat{X}_u)\|_2^2}{\Lambda}\right) \min\left(
\sqrt{2k}\|I_k - C(\widehat{Y}_v)\|_2, \frac{2\|I_k - C(\widehat{Y}_v)\|_2^2}{\Lambda}\right) + \\ &\min\left(
\sqrt{2k}\|I_k - C(\widehat{X}_u)\|_2, \frac{2\|I_k - C(\widehat{X}_u)\|_2^2}{\Lambda}\right) \|Y_v\|_F + \\ & \min\left(
\sqrt{2k}\|I_k - C(\widehat{Y}_v)\|_2, \frac{2\|I_k - C(\widehat{Y}_v)\|_2^2}{\Lambda}\right) \|X_u\|_F
\end{align*}

As $C(\widehat{X}_u) \rightarrow I_k$ or $C(\widehat{Y}_v) \rightarrow I_k$, $E\rightarrow 0$. Observe that the limiting conditions for $C(\widehat{X}_u)$ and $C(\widehat{Y}_v)$ can be satisfied by the ``whitening'' constraint. In other words, as $C(X_u) = I_k$ and $C(Y_v) = I_k$, $C(\widehat{X}_u)$ and $C(\widehat{Y}_v)$ converge to $C(X_u)$ and $C(Y_v)$, the approximation error goes to zero.
\end{proof}

\subsection{Implementation details of CCA on fixed dataset}
\label{appendix_implementation_fix_dataset}
\textbf{Implementation details.} On all three benchmark datasets, we only passed the data once for both our RSG+ and MSG \cite{arora2017stochastic} and we use the code from \cite{arora2017stochastic} to produce MSG results.
We conducted experiments on different dimensions of target space: $k=1,2,4$. The choice of $k$ is motivated by the fact that the spectrum of the datasets decays quickly. Since our RSG+ processes data in small blocks,
we let data come in mini-batches (mini-batch size was set to $100$). 





\subsection{Error metrics for fairness}
\label{appendix_fair_metric}
\textbf{Equality of Opportunity (EO) \cite{hardt2016equality}}: A classifier $h$ is said to satisfy EO if the prediction is independent of the protected attribute $s$ (in our experiment $s$ is a binary variable where $s=1$ stands for {\it Male} and $s=0$ stands for {\it Female}) for classification label $y\in\{0,1\}$. We use the difference of false negative rate (conditioned on $y=1$) across two groups identified by protected attribute $s$ as the error metric, and we denote it as DEO.

\textbf{Demographic Parity (DP) \cite{yao2017beyond}}: A classifier $h$ satisfies DP if the likelihodd of making a misclassification among the positive predictions of the classifier is independent of the protected attribute $s$. We denote the difference of demographic parity between two groups identified by the protected attribute as DDP.

\subsection{Implementation details of fairness experiments}
\label{appendix_implementation_fairness}
\textbf{Implementation details.} The network is trained for $20$ epochs with learning rate $0.01$ and batch size $256$. We follow \cite{donini2018empirical} to use NVP (novel validation procedure) to evaluate our result: first we search for hyperparameters that achieves the highest classification score and then report the performance of the model which gets minimum fairness error metrics with accuracy within the highest $90\%$ accuracies. When we apply our RSG+ on certain layers, we first use randomized projection to project the feature into $1$k dimension, and then extract top-$10$ canonical components for training. Similar to our previous experiments on DeepCCA, the batch method does not scale to $1$k dimension.

\textbf{Resnet-18 architecture and position of Conv-0,1,2 in Table 3.}
The Resnet-18 contains a first convolutional layer followed by normalization, nonlinear activation, and max pooling. Then it has four residual blocks, followed by average polling and a fully connected layer. We denote the position after the first convolutional layer as conv$0$, the position after the first residual block as conv$1$ and the position after the second residual block as conv$2$. We choose early layers since late layers close to the final fully connected layer will have feature that is more directly relevant to the classification variable ({\it attractiveness} in this case).

\begin{table}[!ht]
\caption{Results of \cite{Yger2012AdaptiveCC} (on CIFAR-10, our implementation of \cite{Yger2012AdaptiveCC} faces convergence issues).}
\centering
\smallskip
\begin{tabular}{l  c c c | c c c }
\hline
  &  \multicolumn{3}{c}{MNIST} & \multicolumn{3}{c}{Mediamill} \\
  Performance                     & $k=1$     & $k=2$    & $k=4$& $k=1$     & $k=2$    & $k=4$\\
\hline
PCC  & $0.93$ & $0.81$ & $0.53$  & $0.55$  & $0.61$  & $0.51$   \\
Time (s)   & $575.88$  & $536.46$  & $540.91$  & $41.89$  & $28.66$  & $28.76$    \\
\hline
\end{tabular}
\label{table_yger_2012}
\end{table}

\subsection{Comparison with \cite{Yger2012AdaptiveCC}}
We implemented the method from \cite{Yger2012AdaptiveCC} and conduct experiments on the three datasets above. The results are shown in Table \ref{table_yger_2012}. We tune the step size between $[0.0001,0.1]$ and $\beta=0.99$ as used in their paper. On MNIST and MEDIAMILL, the method performs comparably with ours except $k=4$ case on MNIST where it does not converge well. Since this algorithms also has a $d^3$ complexity, the runtime is $100\times$ more than ours on MNIST and 
$20\times$ more on Mediamill. On CIFAR10, we fail to find a suitable step size for convergence.


\end{document}